\title{Online linear optimization with the log-determinant regularizer}
\author{%
 Ken-ichiro MORIDOMI \thanks{Department of Informatics, Kyushu University} \and
 Kohei HATANO\thanks{Department of Informatics, Kyushu University} \and
 Eiji TAKIMOTO\thanks{Department of Informatics, Kyushu University}
}
\begin{document}
\date{}
\maketitle

\newtheorem{defi}{Definition}[section]
\newtheorem{theo}{Theorem}[section]
\newtheorem{prop}{Proposition}[section]
\newtheorem{coro}{Corollary}[section]
\newtheorem{assu}{Assumption}[section]
\newtheorem{lemm}{Lemma}[section]

\newtheorem{fact}{Fact}
\newtheorem{ex}{Example}

\def\remark{\par\noindent\hangindent0pt{\bf Remark.}~}

% others
\def\OMIT#1{}
\def\newwd#1{{\em #1}}

% local.mac
\newcommand{\mnote}[1]{\marginpar{#1}}
\newcommand{\mynote}[1]{{\bf {#1}}}

\newcommand{\bds}[1]{\boldsymbol{#1}}
\renewcommand{\vec}[1]{\boldsymbol{#1}}
\newcommand{\mat}[1]{\mathrm{#1}}
\newcommand{\msf}[1]{\mathsf{#1}}
\newcommand{\mbf}[1]{\mathrm{#1}}
\newcommand{\mbb}[1]{\mathbb{#1}}
\newcommand{\mcal}[1]{\mathcal{#1}}
\newcommand{\mfr}[1]{\mathfrak{#1}}
\newcommand{\T}{\msf{T}}
\newcommand{\rd}{\partial}
\newcommand{\I}{\infty}

% operation---------------------
\newcommand{\Ind}{\mathbf{1}}
\newcommand{\sceq}{\succeq}
\newcommand{\Tr}{\mathrm{Tr}}
\renewcommand{\det}{\mathrm{det}}
\newcommand{\sym}{\mathrm{sym}}
\newcommand{\rank}{\mathrm{rank}}
\newcommand{\sgn}{\mathrm{sgn}}
\newcommand{\diag}{\mathrm{diag}}
\newcommand{\conv}{\mathrm{conv}}
\newcommand{\vectorize}{\mathrm{vec}}

\newcommand{\kp}{\otimes}
\newcommand{\ip}{\cdot}
\newcommand{\fp}{\bullet}
\newcommand{\cp}{\circ}

\newcommand{\diff}[2]{\frac{\rd #1}{\rd #2}}
\newcommand{\hess}[3]{\frac{\rd^2 #1}{\rd #2 \rd #3}}

% norm--------------------------
\newcommand{\n}[1]{\| #1 \|}
\newcommand{\nOp}[1]{\| #1 \|_\mathrm{Op}}
\newcommand{\nSp}[1]{\| #1 \|_\mathrm{Sp}}
\newcommand{\nFr}[1]{\| #1 \|_\mathrm{Fr}}
\newcommand{\nTr}[1]{\| #1 \|_\mathrm{Tr}}
\newcommand{\nMax}[1]{\| #1 \|_\mathrm{max}}

% matrix------------------------
\newcommand{\matZero}{\mbf{0}}
\newcommand{\matA}{\mbf{A}}
\newcommand{\matB}{\mbf{B}}
\newcommand{\matC}{\mbf{C}}
\newcommand{\matD}{\mbf{D}}
\newcommand{\matE}{\mbf{E}}
\newcommand{\matF}{\mbf{F}}
\newcommand{\matG}{\mbf{G}}
\newcommand{\matH}{\mbf{H}}
\newcommand{\matI}{\mbf{I}}
\newcommand{\matJ}{\mbf{J}}
\newcommand{\matK}{\mbf{K}}
\newcommand{\matL}{\mbf{L}}
\newcommand{\matM}{\mbf{M}}
\newcommand{\matN}{\mbf{N}}
\newcommand{\matO}{\mbf{O}}
\newcommand{\matP}{\mbf{P}}
\newcommand{\matQ}{\mbf{Q}}
\newcommand{\matR}{\mbf{R}}
\newcommand{\matS}{\mbf{S}}
\newcommand{\matT}{\mbf{T}}
\newcommand{\matU}{\mbf{U}}
\newcommand{\matV}{\mbf{V}}
\newcommand{\matW}{\mbf{W}}
\newcommand{\matX}{\mbf{X}}
\newcommand{\matY}{\mbf{Y}}
\newcommand{\matZ}{\mbf{Z}}
\newcommand{\matSigma}{\mbf{\Sigma}}
\newcommand{\matTheta}{\mbf{\Theta}}
\newcommand{\matPhi}{\mbf{\Phi}}
\newcommand{\matPi}{\mbf{\Pi}}

\newcommand{\hmatX}{\hat{\matX}}
\newcommand{\hmatY}{\hat{\matY}}
\newcommand{\hmatZ}{\hat{\matZ}}
\newcommand{\hmatPi}{\hat{\matPi}}

\newcommand{\tmatY}{\tilde{\matY}}
\newcommand{\tmatPi}{\tilde{\matPi}}

% vector------------------------
\newcommand{\veczero}{\bds{0}}
\newcommand{\veca}{\bds{a}}
\newcommand{\vecb}{\bds{b}}
\newcommand{\vecc}{\bds{c}}
\newcommand{\vecd}{\bds{d}}
\newcommand{\vece}{\bds{e}}
\newcommand{\vecf}{\bds{f}}
\newcommand{\vecg}{\bds{g}}
\newcommand{\vech}{\bds{h}}
\newcommand{\veci}{\bds{i}}
\newcommand{\vecj}{\bds{j}}
\newcommand{\veck}{\bds{k}}
\newcommand{\vecl}{\bds{\ell}}
\newcommand{\vecm}{\bds{m}}
\newcommand{\vecn}{\bds{n}}
\newcommand{\veco}{\bds{o}}
\newcommand{\vecp}{\bds{p}}
\newcommand{\vecq}{\bds{q}}
\newcommand{\vecr}{\bds{r}}
\newcommand{\vecs}{\bds{s}}
\newcommand{\vect}{\bds{t}}
\newcommand{\vecu}{\bds{u}}
\newcommand{\vecv}{\bds{v}}
\newcommand{\vecw}{\bds{w}}
\newcommand{\vecx}{\bds{x}}
\newcommand{\vecy}{\bds{y}}
\newcommand{\vecz}{\bds{z}}
\newcommand{\vecpi}{\bds{\pi}}
\newcommand{\vecone}{\bds{1}}
\newcommand{\vecsigma}{\bds{\sigma}}
\newcommand{\veczeta}{\bds{\zeta}}

\newcommand{\hvecx}{\hat{\vecx}}
\newcommand{\hvecy}{\hat{\vecy}}
\newcommand{\hvecz}{\hat{\vecz}}
\newcommand{\hvecpi}{\hat{\vecpi}}

% set---------------------------
\newcommand{\calA}{\mcal{A}}
\newcommand{\calB}{\mcal{B}}
\newcommand{\calC}{\mcal{C}}
\newcommand{\calD}{\mcal{D}}
\newcommand{\calE}{\mcal{E}}
\newcommand{\calF}{\mcal{F}}
\newcommand{\calG}{\mcal{G}}
\newcommand{\calH}{\mcal{H}}
\newcommand{\calI}{\mcal{I}}
\newcommand{\calJ}{\mcal{J}}
\newcommand{\calK}{\mcal{K}}
\newcommand{\calL}{\mcal{L}}
\newcommand{\calM}{\mcal{M}}
\newcommand{\calN}{\mcal{N}}
\newcommand{\calO}{\mcal{O}}
\newcommand{\calP}{\mcal{P}}
\newcommand{\calQ}{\mcal{Q}}
\newcommand{\calR}{\mcal{R}}
\newcommand{\calS}{\mcal{S}}
\newcommand{\calT}{\mcal{T}}
\newcommand{\calU}{\mcal{U}}
\newcommand{\calV}{\mcal{V}}
\newcommand{\calW}{\mcal{W}}
\newcommand{\calX}{\mcal{X}}
\newcommand{\calY}{\mcal{Y}}
\newcommand{\calZ}{\mcal{Z}}

\newcommand{\realR}{\mbb{R}}
\newcommand{\realS}{\mbb{S}}

\newcommand{\expE}{\mbb{E}}

% greek-------------------------
\newcommand{\eps}{\epsilon}

% rademacher -------------------
\newcommand{\Ra}{\mfr{R}}
\newcommand{\ERa}{\hat{\Ra}}
\newcommand{\sample}{\calS}
\newcommand{\Kt}{K_\tau}

\newcommand{\loss}{\ell}
\newcommand{\Eloss}{\hat{\loss}}

\newcommand{\Lipschitz}{L}

% other-------------------------
%\def\loss{\ell}
\def\tt#1{{#1 \times #1}}
\def\Tsum{\sum_{t=1}^{T}}
\def\l{\vec{\ell}}
\def\realRN{\mbb{R}^N}
\def\realRmn{\mbb{R}^{m \times n}}
\def\realRnm{\mbb{R}^{n \times m}}
\def\realSNN{\mbb{S}^\tt{N}}
\def\realSNNPSD{ \realSNN_+ }

\def\mKvn{\{ \x \in \mR^{N}_{\leq \mbf{0}} : \n{\x}_1 \leq \tau \}}
\def\mKbg{\{ \x \in \mR^{N}_{\leq \mbf{0}} : \n{\x}_\I \leq \sigma \}}
\def\mKgd{\{ \x \in \mR^{N}_{\leq \mbf{0}} : \n{\x}_2 \leq \rho \}}

\def\mKVN{\{ \X \in \mS^{\tt{N}}_{\sc \mbf{0}} : \nKF{\X} \leq \tau \}}
\def\mKBG{\{ \X \in \mS^{\tt{N}}_{\sc \mbf{0}} : \nOP{\X} \leq \sigma \}}
\def\mKGD{\{ \X \in \mS^{\tt{N}}_{\sc \mbf{0}} : \nF{\X} \leq \rho \}}

\def\mLvn{\{ \l \in \mR^N : \n{\l}_\I \leq \gamma_\I \}}
\def\mLbg{\{ \l \in \mR^N : \n{\l}_1 \leq \gamma_1 \}}
\def\mLgd{\{ \l \in \mR^N : \n{\l}_2 \leq \gamma_2 \}}

\def\mLVN{\{ \L \in \mS^{\tt{N}} : \nOP{\L} \leq \gamma_\I \}}
\def\mLBG{\{ \L \in \mS^{\tt{N}} : \nKF{\L} \leq \gamma_1 \}}
\def\mLGD{\{ \L \in \mS^{\tt{N}} : \nF{\L} \leq \gamma_2 \}}

\def\SymPSD{ \mS^{\tt{N}}_{\sc \mbf{0}} }
\def\Sym{ \mS^{\tt{N}} }

\newcommand{\OLO}{\mathrm{OLO}}
\newcommand{\BT}{$(\beta,\tau)$}

\begin{abstract}
We consider online linear optimization over
symmetric positive semi-definite matrices, which has various
applications including the online collaborative filtering.
The problem is formulated as a repeated game between the algorithm
and the adversary, where in each round $t$ the algorithm and the adversary
choose matrices $\matX_t$ and $\matL_t$, respectively, and
then the algorithm suffers a loss given by the Frobenius
inner product of $\matX_t$ and $\matL_t$.
The goal of the algorithm is to minimize the cumulative loss.
We can employ a standard framework called Follow the Regularized Leader
(FTRL) for designing algorithms, where we need to choose an appropriate
regularization function to obtain a good performance guarantee.
We show that the log-determinant regularization works better
than other popular regularization functions
% such as the Frobenius norm and the negative entropy,
in the case where the loss matrices $\matL_t$ are all sparse.
Using this property, we show that our algorithm achieves an
optimal performance guarantee for the online collaborative filtering.
The technical contribution of the paper is to develop a new 
technique of deriving performance bounds by exploiting the
property of strong convexity of the log-determinant with respect to
the loss matrices, while in the previous analysis the strong convexity
is defined with respect to a norm.
Intuitively, skipping the norm analysis results in the improved bound.
Moreover, we apply our method to online linear optimization over vectors
and show that the FTRL with the Burg entropy regularizer,
which is the analogue of the log-determinant regularizer
in the vector case, works well.

\end{abstract}
\begin{keywords}
Online matrix prediction, log-determinant, online collaborative filtering
\end{keywords}

%8 pages for paper
%%%%%%%%%%%%%%%%%%%%%%%%%%%%%%%%%%%%%%%%%%%%%%%%%%%%%%%%%%%%%%%%%%%%%%%%%%%%%%%%%%%%%%\input{intro}
\section{Introduction}

Online predicion is a theoretical model of repeated processes
of making decisions and receiving feedbacks, and has been
extensively studied in the machine learning community
for a couple of decades~\cite{Degenne2016,Kale2016,Cutkosky2017}.
Typically, decisions are formulated as vectors in a fixed
set called the decision space and feedbacks as functions
that define the losses for all decision vectors.
Recently, much attention has been paid to a more general
setting where decisions are formulated as matrices, since
it is more natural for some applications such as
ranking and recommendation tasks~\cite{Cesa-Bianchi2011,Herbster2016,Jin2016}.

Take the online collaborative filtering as an example.
The problem is formulated as in the following protocol:
Assume we have a fixed set of $n$ users and a fixed set of $m$ items.
For each round $t=1,2,\dots,T$, the following happens.
(\romannumeral1) The algorithm receives from the environment
a user-item pair $(i_t,j_t)$,
(\romannumeral2) the algorithm predicts how much user $i_t$ likes
item $j_t$ and chooses a number $x_t$ that represents
the degree of preference,
(\romannumeral3) the environment returns the true evaluation value $y_t$
of the user $i_t$ for the item $j_t$, and then
(\romannumeral4) the algorithm suffers loss defined by the
prediction value $x_t$ and the true value $y_t$, say, $(x_t - y_t)^2$.
Note that, (\romannumeral3) and (\romannumeral4) in the
protocol above can be generalized in the following way:
(\romannumeral3) the environment returns a loss function $\loss_t$, say
$\loss_t(x) = (x - y_t)^2$, and (\romannumeral4) the algorithm suffers
loss $\loss_t(x_t)$.
The goal of the algorithm is to minimize the cumulative loss, 
or more formally, to minimize the \emph{regret}, which is the
most standard measure in online prediction.
The regret is the difference between the cumulative loss of
the algorithm and that of the best fixed prediction policy
in some policy class. Note that the best policy is determined
in hindsight, i.e., it depends on the whole feedback sequence.
Now we claim that the problem above can be regarded
as a matrix prediction problem:
the algorithm chooses (before observing the pair $(i_t,j_t)$)
the prediction values for all pairs as an $n \times m$ matrix,
although only the $(i_t,j_t)$-th entry is used as the prediction.
In this perspective, the policy class is formulated as a restricted
set of matrices, say, the set of matrices of bounded trace norm,
which is commonly used in collaborative filtering~\cite{Srebro2005,Mazumder2010,Shamir2011,Koltchinskii2011}.
Moreover, we can assume without loss of generality that
the prediction matrices are also chosen from the policy class.
So, the policy class is often called the decision space.

In most application problems including the online collaborative filtering,
the matrices to be predicted are not square, which makes the analysis difficult. %\textbf{Why?}
Hazan et.al.~\cite{HazanKaleShalev-Shwartz2012} show that
any online matrix prediction problem formulated as in the protocol above
can be reduced to an online prediction problem where the decision
space consists of symmetric positive semi-definite matrices
under linear loss functions. A notable property of the reduction
is that the loss functions of the reduced problem are the inner
product with sparse loss matrices, where only at most 4 entries
are non-zero. 
Thus, we can focus on the online prediction problems
for symmetric positive semi-definite matrices,
which we call the online semi-definite programming (online SDP) problems.
In particular we are interested in the case where the problems
are obtained by the reduction, which we call
the online \emph{sparse} SDP problems.
Thanks to the symmetry and positive semi-definiteness of the
decision matrices and the sparseness of the loss matrices, the problem
becomes feasible and Hazan et.al.\ propose an algorithm for the
online sparse SDP problems, by which they give regret bounds
for various application problems including the online max-cut,
online gambling, and the online collaborative
filtering~\cite{HazanKaleShalev-Shwartz2012}.
Unfortunately, however, all these bounds turn out to be sub-optimal.

In this paper, we propose an algorithm for the online sparse SDP problems
by which we achieve optimal regret bounds for those application
problems.

To this end, we employ a standard framework called
Follow the Regularized Leader (FTRL) for designing and anlyzing
algorithms~\cite{Hazan2009,RakhlinAbernethyAgarwalBartlettHazanTewari2009,Shalev-Shwartz2012,Haz2016}, 
where we need to choose as a parameter an appropriate regularization function (or regularizer) to obtain a good
regret bound.
Hazan et al.~use the von Neumann entropy (or sometimes called
the matrix negative entropy) as the regularizer to obtain
the results mentioned above\cite{HazanKaleShalev-Shwartz2012},
which is a generalization of Tsuda et al.~\cite{TsudaRatschWarmuth2005}.
Another possible choice is the log-determinant regularizer,
whose Bregmann divergence is so called the LogDet divergence.
There are many applications of the LogDet divergence such as
metric learning \cite{DavisKulisJainSraDhillon2007} 
and Gaussian graphical models \cite{RavikumarWainwrightRaskuttiYu2011}.
However, the log-determinant regularizer is less popular in
online prediction and it is unclear how to derive general and
non-trivial regret bounds when using the FTRL with the
log-determinant regularizer, as posed as an open problem
in~\cite{TsudaRatschWarmuth2005}.
Indeed, Davis et al.\ apply the FTRL
with the log-determinant regularizer for square loss and give a
cumulative loss bound \cite{DavisKulisJainSraDhillon2007},
but it contains a data-dependent parameter and the regret bound is
still unclear.
Christiano considers a very specific sub-class of online sparse SDP
problems and succeeds to improve the regret bound for a particular
application problem, the online max-cut problem~\cite{Christiano2014}. 
But the problems he examines do not cover the whole class of
online sparse SDP problems and hence his algorithm cannot be applied to
the online collaborative filtering, for instance.

In this paper, we improve regret bounds for online sparse
SDP problems by analyzing the FTRL with the log-determinant regularizer.
In particular, our contributions are summarized as follows.
\begin{enumerate}
\item
We give a non-trivial regret bound of the FTRL
with the log-determinant regularizer for a general class of
online SDP problems. Although the bound seems to be somewhat loose,
it gives a tight bound when the matrices are diagonal
(which corresponds to the vector predictions).

\item
We extend the analysis of Christiano in
\cite{Christiano2014} and develop a new technique of deriving
regret bounds by exploiting the property of strong convexity
of the regularizer with respect to the loss matrices.
The analysis in~\cite{Christiano2014} is not explicitly
stated as in a general form and focused on a very specific
case where the loss matrices are block-wise sparse.

\item
We improve the regret bound for the online sparse SDP problems,
by which we give optimal regret bounds for the application problems,
namely, the online max-cut, online gambling, and the online
collaborative filtering.

\item
We apply the results to the case where the decision space consists
of vectors, which can be reduced to online matrix prediction problems
where the decision space consists of diagonal matrices.
In this case, the general regret bound mentioned in 1 also gives
tight regret bound.
\end{enumerate}

\section{Problem setting}

We first give the notations and then describe the problem
setting: the online semi-definite programming problem
(online SDP problem, for short).

\subsection{Notations}

Throughout the paper, a roman capital letter indicates a matrix. 
Let $\realRmn$, $\realSNN$, $\realSNNPSD$ 
denote the set of $m \times n$ matrices, 
the set of $\tt{N}$ symmetric matrices, 
and the set of $\tt{N}$ symmetric positive semi-definite matrices, 
respectively. 

We write the trace of a matrix $\matX$ as $\Tr(\matX)$ and the
determinant as $\det(\matX)$. 
We write the trace norm of $\matX$ as $\nTr{\matX} = \sum_{i} \sigma_i $,
the spectral norm as $\nSp{\matX} = \max_i \sigma_i $,
and the Frobenius norm as $\nFr{\matX} = \sqrt{\sum_i \sigma_i^2}$,
where $\sigma_i$ is the $i$-th largest singular value of $\matX$.
Note that if $\matX$ is positive semi-definite, then
$\Tr(\matX) = \nTr{\matX}$ and $\sigma_i$ is the
$i$-th largest eigenvalue of $\matX$.
The identity matrix is denoted by $\matE$. 
%$\matX \sceq \mat0$ means $\matX$ is positive semi-definite. 
For any positive integer $m$, we write $[m] = \{1, 2, \dots m\}$. 
%The logarithm we use the natural log.
%and $e$ is base of the natural log unless otherwise noticed. 
We define the vectorization of a matrix $\matX \in \realR^{m \times n}$ as
%$$vec(\matX) = (X_{1,1},\dots,X_{m,1},X_{1,2},\dots,X_{m,2},\dots,X_{1,m},\dots,X_{m,n})^\T .$$
$$\vectorize(\matX) = (X_{*,1}^\T ,X_{*,2}^\T ,\dots,X_{*,m}^\T )^\T ,$$
where $X_{*,i}$ is the $i$-th column of $\matX$. 
For a vector $\vecx \in \realR^N$.
$\diag(\vecx)$ denote the $\tt{N}$ diagonal matrix $\matX$ such that
$X_{i,i} = x_i$.
For $m \times n$ matrices $\matX$ and $\matL$, 
$ \matX \fp \matL = \sum_{i,j}^{m,n} X_{i,j}L_{i,j}
= \vectorize(\matX)^\T \vectorize(\matL) $ is the Frobenius inner product. 

For a differentiable function $R : \realRmn \to \realR$,
its gradient $\nabla R(\matX)$ is the $m \times n$ matrix 
whose $(i,j)$-th componet is
$\diff{R(\matX)}{X_{i,j}}$, and its Hessian
$\nabla^2 R(\matX)$ is the $mn \times mn$ matrix
whose $((i,j), (k,l))$-th component is
$\hess{R(\matX)}{X_{i,j}}{X_{k,l}}$~\cite{Dattorro2005}. 

\subsection{Online SDP problem}

We consider an online linear optimization problem
over symmetric semi-definite matrices, which we call
the online SDP problem.
The problem is specified by a pair $(\calK, \calL)$, where
$\calK \subseteq \realSNNPSD$ is a convex set of symmetric
positive semi-definite matrices and $\calL \subseteq \realSNN$
is a set of symmetric matrices.
The set $\calK$ is called the decision space and $\calL$ the loss space.
The online SDP problem $(\calK, \calL)$ is a repeated game
between the algorithm and the adversary (i.e., an environment
that may behave adversarially), which is described as the
following protocol.

In each round $t = 1, 2, \ldots, T$, the algorithm
\begin{enumerate}
  \item chooses a matrix $\matX_t \in \calK$,
  \item receives a loss matrix $\matL_t \in \calL$ from the adversary, and
  \item suffers the loss $\matX_t \fp \matL_t$.
\end{enumerate}
The goal of the algorithm is to minimize the regret
$Reg(T,\calK,\calL)$, defined as
%\begin{defi}[Regret of online linear optimization]
\[
	Reg(T,\calK,\calL) = \Tsum \matL_t \fp \matX_t
		- \Tsum \matL_t \fp \matU,
\]
where $\matU = \arg\min_{\matX \in \calK} \Tsum \matL_t \fp \matX$ is
the best matrix in the decision set $\calK$ that minimizes
the cumulative loss. The matrix $\matU$ is called the
best offline matrix. 

%\end{defi}
\subsection{Online linear optimization over vectors}

The online SDP problem is a generalization of the
online linear optimization problem over vectors, which is a more standard
problem setting in the literature.
For the ``vector'' case, the problem is described as the
following protocol:

In each round $t = 1, \cdots, T$, the algorithm
\begin{enumerate}
  \item chooses $\vecx_t \in \calK \subset \realR^N_+$,
  \item receives $\vecl_t \in \calL \subset \realR^N$ from the adversary, and
  \item suffers the loss $\vecx_t^\T \vecl_t$.
\end{enumerate}
It is easy to see that
the problem is equivalent to the online SDP problem $(\calK' \calL')$
where $\calK' = \{\diag(\vecx) \mid \vecx \in \calK\}$
and $\calL' = \{\diag(\vecl) \mid \vecl \in \calL\}$.
So, all the results for the online SDP problem can be applied to the
online linear optimization over vectors.

\section{FTRL and its regret bounds by standard derivations}

Follow the Regularized Leader (FTRL) is a standard framework for
designing algorithms for a wide class of online optimizations
(See, e.g., \cite{Shalev-Shwartz2012}).
% Is this citation OK ?
To employ the FTRL, we need to specify a convex function
$R: \calK \to \realR$ called the regularization function,
or simply the regularizer.
For the online SDP problem $(\calK,\calL)$,
the FTRL with regularizer $R$ suggests
to choose a matrix $\matX_t \in \calK$ as the decision at each round $t$
according to
\[
	\matX_t = \arg \min_{\matX \in \calK}
		\Bigl( R(\matX) + \eta \sum_{s=1}^{t-1} \matL_s \fp \matX \Bigr),
\]
where $\eta > 0$ is a constant called the learning rate.
Throughout the paper, we assume for simplicity that
all the regularizers $R$ are differentiable.

The next lemma gives a general method of deriving regret bounds.

\begin{lemm}[See, e.g., Theorem 2.11 of \cite{Shalev-Shwartz2012}] 
\label{lem:standardDerivation}
Assume that for some real numbers $s,g > 0$ and a norm $\| \cdot \|$
the following holds.
\begin{enumerate}
\item
$R$ is $s$-strongly convex with respect to the norm $\| \cdot \|$, i.e.,
for any $\matX, \matY \in \calK$,
\[
		R(\matX) \geq R(\matY) + \nabla R(\matX) \fp (\matX-\matY)
			+ \frac{s}{2} \| \matX - \matY \|^2,
\]
or equivalently, for any $\matX \in \calK$ and
$\matW \in \mbb{R}^\tt{N}$,
\[
	\vectorize(\matW)^\T \nabla^2 R(\matX) \vectorize(\matW)
		\geq s \|\matW\|^2.
\]
\item
Any loss matrix $\matL \in \calL$ satisfies $\| \matL \|_* \leq g$,
where $\|\cdot\|_*$ is the dual norm of $\| \cdot \|$.
\end{enumerate}
Then, the FTRL with regularizer $R$ achieves
\[
	Reg(T,\calK,\calL) \leq 2 g
		\sqrt{\frac{\max_{\matX,\matX' \in \calK} (R(\matX)
			- R(\matX'))}{s}T}
\]
for an appropriate choice of the learning rate $\eta$. 
\end{lemm}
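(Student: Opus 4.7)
The plan is to follow the standard FTRL template built around a stability bound on $\matX_t - \matX_{t+1}$. I would begin with the classical ``Follow-the-Leader / Be-the-Leader'' telescoping inequality, which in our setting states
\[
    \Tsum \matL_t \fp (\matX_t - \matU)
    \leq \Tsum \matL_t \fp (\matX_t - \matX_{t+1})
        + \frac{R(\matU) - R(\matX_1)}{\eta}.
\]
This is proved by induction on $T$ using only the defining optimality of each $\matX_{t+1}$ for the regularized cumulative loss up through round $t$, and does not use the two hypotheses of the lemma at all. The second term on the right is already bounded by $(\max R - \min R)/\eta$, so everything reduces to bounding the per-round ``stability'' term $\matL_t \fp (\matX_t - \matX_{t+1})$.

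Next I would derive the stability bound. Let $F_t(\matX) = R(\matX) + \eta\sum_{s=1}^{t-1} \matL_s \fp \matX$, so $\matX_t = \arg\min_{\matX\in\calK} F_t(\matX)$ and $\matX_{t+1} = \arg\min_{\matX\in\calK} F_{t+1}(\matX)$, with $F_{t+1}(\matX) = F_t(\matX) + \eta\,\matL_t \fp \matX$. Since $R$ is $s$-strongly convex with respect to $\|\cdot\|$ and the linear terms do not affect strong convexity, both $F_t$ and $F_{t+1}$ are $s$-strongly convex. Applying the standard strong-convexity-at-the-minimizer inequality on the convex set $\calK$ to each gives
\[
    F_t(\matX_{t+1}) - F_t(\matX_t) \geq \tfrac{s}{2}\|\matX_{t+1}-\matX_t\|^2,
\qquad
    F_{t+1}(\matX_t) - F_{t+1}(\matX_{t+1}) \geq \tfrac{s}{2}\|\matX_t-\matX_{t+1}\|^2.
\]
Adding these and using $F_{t+1} - F_t = \eta\,\matL_t \fp \matX$, the $R$-terms cancel and I obtain $\eta\,\matL_t \fp (\matX_t - \matX_{t+1}) \geq s\|\matX_t - \matX_{t+1}\|^2$. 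Combining with the dual-norm inequality $\matL_t \fp (\matX_t - \matX_{t+1}) \leq \|\matL_t\|_*\,\|\matX_t - \matX_{t+1}\| \leq g\,\|\matX_t - \matX_{t+1}\|$, I conclude $\|\matX_t - \matX_{t+1}\| \leq \eta g/s$ and therefore $\matL_t \fp (\matX_t - \matX_{t+1}) \leq \eta g^2/s$.

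Putting the pieces together, and writing $D = \max_{\matX,\matX'\in\calK}(R(\matX) - R(\matX'))$, the regret is bounded by $\eta T g^2/s + D/\eta$. Minimizing in $\eta$ gives $\eta = \sqrt{Ds/(Tg^2)}$ and the claimed bound $2g\sqrt{DT/s}$. The only place that requires care is justifying the strong-convexity-at-the-minimizer inequalities over the constrained set $\calK$ without recourse to gradients (since the minimum may be attained on the boundary of $\calK$); this follows directly from applying the definition of $s$-strong convexity along the segment between the minimizer and any other point in $\calK$ and using that the directional derivative of $F$ at the minimizer into $\calK$ is nonnegative. I do not expect any further obstacle — the result is essentially a textbook calculation, and the statement is quoted from \cite{Shalev-Shwartz2012}.
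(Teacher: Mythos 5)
Your proof is correct and is essentially the argument the paper relies on: the paper itself does not prove this lemma (it cites Theorem 2.11 of \cite{Shalev-Shwartz2012}), but your decomposition via the FTL--BTL inequality plus the two strong-convexity-at-the-minimizer bounds for $F_t$ and $F_{t+1}$ is exactly the template the paper states as Lemma~\ref{lem:FTLBTLLemma} and reuses verbatim in its proof of Lemma~\ref{lem:FTRLWithStronglyConvexAndLipschitzRegularizer}, with the only difference there being that $\|\matX_t-\matX_{t+1}\|$ and the dual-norm step are replaced by $|\matL_t\fp(\matX_t-\matX_{t+1})|$ directly. Your handling of the constrained minimizer via the nonnegative directional derivative also matches the paper's justification, so there is nothing to fix.
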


In the subsequent subsections, we give regret bounds
for the FTRL with popular regularizers.
The first two are straightforward to derive
from known results.

\subsection{FTRL with the Frobenius norm regularization}

The Frobenius norm regularization function is defined as
$R(\matX) = \frac{1}{2} \nFr{\matX}^2$, 
which is the matrix analogue of the $L_2$-norm for
vectors. The FTRL with this regularizer yields the
online gradient descent (OGD) algorithm~\cite{Haz2016}.
Since $R$ is 1-strongly convex with respect to $\nFr{\cdot}$
and the dual of $\nFr{\cdot}$ is $\nFr{\cdot}$,
Lemma~\ref{lem:standardDerivation} gives
\begin{equation}
Reg(T,\calK_2,\calL_2) \leq \rho \gamma_2 \sqrt{2T},
\label{eqn:generalOLOFrobenius}
\end{equation}
where
$\calK_2 = \{\matX \in \realSNNPSD:\nFr{\matX} \leq \rho \}$ and  
$\calL_2 = \{\matL \in \realSNN : \nFr{\matL} \leq \gamma_2 \}$.

\subsection{FTRL with the entropic regularization}

The entropic regularization function is defined as
$R(\matX) = \Tr(\matX \log \matX - \matX)$, 
which is the matrix analogue of the unnormalized entropy
for vectors.
%$R(\vecx) = \sum_{i=1}^N (x_i \log x_i - x_i) $ 
%in the vector case,say, used in the Weighted Majority \cite{RakhlinAbernethyAgarwalBartlettHazanTewari2009}. 
Slightly modifying the proof in \cite{HazanKaleShalev-Shwartz2012},
we obtain the following regret bound for the FTRL with this
regularizer:
\begin{equation}
Reg(T,\calK_1,\calL_\I) \leq 2 \tau \gamma_\I \sqrt{T \log N},
\end{equation}
where
$\calK_1 = \{\matX \in \realSNNPSD:\nTr{\matX} \leq \tau \}$ and
$\calL_\I = \{\matL \in \realSNN : \nSp{\matL} \leq \gamma_\I \}$.

\subsection{FTRL with the log-determinant regularization}

The log-determinant regularization function is defined as
$R(\matX) = -\ln \det(\matX + \eps \matE)$
where $\eps$ is a positive constant.
This is the matrix analogue of the Burg entropy
$-\sum_{i=1}^N \ln x_i$ for vectors $\vecx$ whose
Bregman divergence is the Itakura-Saito divergence. 
The constant $\epsilon $ stabilizes the regularizer to
make the regret bound finite.
Unfortunately, it is unclear what norm is appropriate for
measuring the strong convexity of the log-determinant
regularizer to obtain a tight regret bound.
In the next theorem, we examine the spectral norm and
give a (probably loose) regret bound for the online SDP
problem $(\calK_\I, \calL_1)$, where
$\calK_\I = \{\matX \in \realSNNPSD:\nSp{\matX} \leq \sigma \}$ and 
$\calL_1 = \{\matL \in \realSNN : \nTr{\matL} \leq \gamma_1 \}$.

\begin{theo}
\label{thm:generalOLOLogDet}
The FTRL with the log-determinant regularizer with
$\eps = \sigma$ achieves
\begin{equation}
\label{eqn:generalOLOLogDet}
	Reg(T,\calK_\I,\calL_1) \leq 4 \sigma \gamma_1 \sqrt{T N \ln 2}.
\end{equation}
\end{theo}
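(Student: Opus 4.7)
The plan is to apply Lemma~\ref{lem:standardDerivation} with the Frobenius norm as the reference norm, which is self-dual. This requires three ingredients: (i) a strong-convexity constant $s$ of $R(\matX) = -\ln\det(\matX + \eps\matE)$ with respect to $\nFr{\cdot}$ on $\calK_\I$, (ii) a bound $g$ on $\nFr{\matL}$ for $\matL \in \calL_1$, and (iii) a bound on the range $\max R - \min R$ over $\calK_\I$.

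First I would compute the Hessian. A standard matrix-calculus fact is $\nabla^2 R(\matX) = (\matX + \eps\matE)^{-1} \otimes (\matX + \eps\matE)^{-1}$, so using the vec identity, for any symmetric $\matW$,
\[
\vectorize(\matW)^\T \nabla^2 R(\matX) \vectorize(\matW)
 = \Tr\bigl( (\matX+\eps\matE)^{-1} \matW (\matX+\eps\matE)^{-1} \matW \bigr).
\]
On $\calK_\I$ with $\eps = \sigma$, the eigenvalues of $\matX + \sigma\matE$ lie in $[\sigma, 2\sigma]$, hence $(\matX+\sigma\matE)^{-1} \sceq \frac{1}{2\sigma}\matE$. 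Substituting gives the lower bound $\frac{1}{4\sigma^2}\Tr(\matW^2) = \frac{1}{4\sigma^2} \nFr{\matW}^2$, so $R$ is $s$-strongly convex with $s = 1/(4\sigma^2)$ with respect to $\nFr{\cdot}$.

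For (ii), since $\nFr{\matL} \leq \nTr{\matL}$ (the $\ell_2$-norm of singular values is dominated by the $\ell_1$-norm), any $\matL \in \calL_1$ satisfies $\nFr{\matL} \leq \gamma_1$, so $g = \gamma_1$. For (iii), using again that $\matX + \sigma\matE$ has eigenvalues in $[\sigma, 2\sigma]$, we have $\ln\det(\matX + \sigma\matE) \in [N\ln\sigma,\, N\ln(2\sigma)]$, giving $\max_{\matX,\matX' \in \calK_\I}(R(\matX) - R(\matX')) \leq N \ln 2$.

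Plugging into Lemma~\ref{lem:standardDerivation} yields
\[
Reg(T,\calK_\I,\calL_1) \leq 2\gamma_1 \sqrt{\frac{N\ln 2}{1/(4\sigma^2)}\, T} = 4\sigma\gamma_1 \sqrt{TN\ln 2},
\]
which is exactly~\eqref{eqn:generalOLOLogDet}. The only nontrivial step is the strong-convexity calculation, which hinges on rewriting the quadratic form of the Kronecker-product Hessian as a trace and exploiting the spectral bound $(\matX + \sigma\matE)^{-1} \sceq \frac{1}{2\sigma}\matE$ uniformly over $\calK_\I$; once this is in hand, the other two ingredients and the final assembly are routine. One should also verify that the choice $\eps = \sigma$ is optimal (or near-optimal) in balancing the strong-convexity constant $\sim 1/(\sigma+\eps)^{-2}$ against the range $\sim N\ln((\sigma+\eps)/\eps)$, which is the main reason for selecting $\eps = \sigma$ rather than letting $\eps \to 0$.
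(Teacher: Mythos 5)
Your proof is correct and follows essentially the same route as the paper's: both compute the Hessian $(\matX+\eps\matE)^{-1}\otimes(\matX+\eps\matE)^{-1}$, lower-bound its spectrum by $(2\sigma)^{-2}$ on $\calK_\I$ with $\eps=\sigma$, bound the range of $R$ by $N\ln 2$, and invoke Lemma~\ref{lem:standardDerivation}. The only divergence is the norm pairing: the paper establishes $(1/(4\sigma^2))$-strong convexity with respect to $\nSp{\cdot}$ (via $\nFr{\matW}\ge\nSp{\matW}$) and measures the losses in the dual norm $\nTr{\cdot}$, whereas you keep the self-dual Frobenius pair throughout; since $\nFr{\matL}\le\nTr{\matL}\le\gamma_1$, both choices give identical constants and yield exactly~\eqref{eqn:generalOLOLogDet}.
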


\begin{proof}
Below we show that $R$ is $(1/(4\sigma^2))$-strongly convex
with respect to $\nSp{\cdot}$ and
$R(\matX) - R(\matX') \leq N\ln 2$ for any $\matX,\matX' \in \calK$.
Since $\nTr{\cdot}$ is the dual norm of $\nSp{\cdot}$ and
it is clear that $\nTr{L} \leq \gamma_1$ for any $\matL \in \calL_1$,
the theorem follows from Lemma~\ref{lem:standardDerivation}.

The strong convexity of the log-determinant can be verified by
showing positive definiteness of the Hessian of $R$.
The Hessian of $R(\matX) = -\ln\det(\matX+\eps\matE)$ is 
$\nabla^2 R(\matX) = (\matX + \eps \matE)^{-1} \kp (\matX + \eps \matE)^{-1}$
where $\kp$ denotes the Kronecker
product \cite{ForthHovlandPhippsUtkeWalther2012}. 
%Let $\lambda_1(\matX)$ be the largest eigenvalue of a matrix $\matX$.
Since an eignvalue of $A \kp B$ is the product of some eigenvalues of
$A$ and $B$ (see, e.g., \cite{cookbook}) and an eigenvalue of $A^{-1}$ is
the reciprocal of an eigenvalue of $A$,
the minimum eigenvalue of $\nabla^2 R(\matX)$ is
$(\nSp{\matX}+\epsilon)^{-2}$. This implies that
$\nabla^2 R(\matX) - (\sigma+\epsilon)^{-2}\matE$ is positive
semi-definite. In other words,
for any $\matW \in \mbb{R}^\tt{N}$,
\[
	\vectorize(\matW)^\T (\nabla^2 R(\matX)
		- (\sigma+\epsilon)^{-2}\matE) \vectorize(\matW) \ge 0.
\]
Rearranging this inequality and using the fact that
$\vectorize(\matW)^\T\vectorize(\matW) = \nFr{\matW}^2 \ge \nSp{\matW}^2$, 
we get $\vectorize(\matW)^\T \nabla^2 R(\matX) \vectorize(\matW)
\ge (\sigma+\epsilon)^{-2} \nSp{\matW}^2$. 
This implies that $R$ is $(1/(4\sigma^2))$-strongly convex
with respect to $\nSp{\cdot}$.

Next we give upper and lower bounds of $R$.
Note that $\det (\matX+\eps\matE)$ is the product of all eigenvalues of
$\matX+\eps\matE$.
Since, all the eigenvalues are positive and the maximum of them
is bounded by $\sigma+\eps$, 
we have $\eps^N \leq \det (\matX+\eps\matE)
\leq (\sigma+\eps)^N = (2\eps)^N$.
So,
$\max_{\matX,\matX' \in \calK}(R(\matX) - R(\matX'))
\leq N\ln 2$.
\end{proof}
Note that this result is not very impressive, because
$\calK_\I \subseteq \calK_2$ with $\rho = \sqrt{N} \sigma$
and $\calL_1 \subseteq \calL_2$ with $\gamma_2 = \gamma_1$,
and hence the FTRL with the Frobenius norm regularizer has
a slightly better regret bound for $(\calK_\I, \calL_1)$.

In the following sections, we consider a special class of
online SDP problems $(\calK,\calL)$ where
$\calK$ and $\calL$ are further restricted by some complicated way.
For such problems, it is unlikely to derive tight regret bounds
from Lemma~\ref{lem:standardDerivation}.

\section{Online matrix prediction and reduction to online SDP}

Before going to our main contribution, we give a more natural
setting to describe various applications, which is called
the online matrix prediction (OMP) problem.
Then we briefly review the result of Hazan et al., saying that
OMP problems are reduced to online SDP problems
$(\calK,\calL)$ of special form~\cite{HazanKaleShalev-Shwartz2012}.
In particular, the loss matrices in $\calL$ obtained by the
reduction are sparse.
This result motivates us to improve regret bounds for
online sparse SDP problems.

An OMP problem is specified by a pair
$(\calW,G)$, where
$\calW \subseteq [-1,1]^{m \times n} $ is a convex set
of matrices of size $m \times n$ and $G > 0$ is a positive real number.
Note that we do not require $m = n$ or $W^\T = W$.
The OMP problem $(\calW,G)$ is described as the following protocol:
In each round $t = 1, 2, \ldots, T$, the algorithm 
\begin{enumerate}
  \item receives a pair $(i_t,j_t) \in [m] \times [n]$ from the adversary,
  \item chooses $\matW_t \in \calW$ and output $W_{t,(i_t,j_t)}$,
  \item receives $G$-Lipschitz convex loss function $\loss_t : [-1,1] \to \realR$ from the adversary, and
  \item suffers the loss $\loss_t(W_{t,(i_t,j_t)})$. 
\end{enumerate}
The goal is to minimize the following regret:
\[
	Reg_\text{OMP}(T,\calW) = \Tsum \loss_t(W_{t,i_t,j_t})
		- \min_{\matU \in \calW} \Tsum \loss_t(U_{i_t,j_t}).
\]

The online max-cut, the online gambling and the online collaborative
filtering problems are instances of the OMP problems.

\bigskip

\noindent
\textbf{Online max-cut:} \quad
On each round, the algorithm receives a pair of nodes
$(i,j) \in [n] \times [n]$ and should decide whether there is an
edge between the nodes. Formally, the algorithm chooses
$\hat{y}_t \in [-1,1]$, which is interpreted as a randomized prediction
in $\{-1,1\}$: predicts 1 with probability $(1 + \hat{y}_t)/2$
and $-1$ with the remaining probability. 
The adversary then gives the true outcome $y_t \in \{-1,1\}$ indicating
whether $(i_t,j_t)$ is actually joined by an edge or not.
The loss suffered by the algorithm is 
$\ell_t(\hat{y}_t) = |\hat{y}_t - y_t|/2$, which is interpreted
as the probability that the prediction is incorrect.
Note that $\ell_t$ is $(1/2)$-Lipschitz.
The decision space $\calW$ of this problem is the convex hull of
the set $\calC$ of matrices that represent cuts, that is,
$\calC = \{\matC^A \in \{-1,1\}^{n \times n} : A \subseteq [n]\}$,
where $\matC^A_{i,j} = 1$ if $((i\in A) \text{ and } (j \notin A))$
or $((i\notin A) \text{ and } (j\in A))$, and 
$\matC^A_{i,j} = -1$ otherwise.
Note that the best offline matrix
$\matC^A = \arg\min_{\matC^A \in \calC} \sum_t \ell_t(U_{i_t,j_t})$
in $\calC$ is the matrix corresponding to the max-cut $A$
in the weighted graph whose edge weight are
given by $w_{ij} = \sum_{t:(i_t,j_t)=(i,j)} y_t$
for every $(i,j)$~\cite{HazanKaleShalev-Shwartz2012}.
This is the reason why the problem is called online max-cut.

\medskip

\noindent
\textbf{Online gambling:} \quad
On each round, the algorithm receives a pair of teams
$(i,j) \in [n] \times [n]$, and should decide
whether $i$ is going to beat $j$ or not in the upcoming game. 
The decision space is the convex hull of all permutation
matrices $\matW^P \in \{-1,1\}^{n \times n}$,
where $\matW^P$ is the matrix corresponding to permutation $P$ over $[n]$
that satisfies
$W^P_{i,j} = 1$ if $i$ appears before $j$ in the permutation $P$
and $W^P_{i,j} = -1$ otherwise.

\medskip

\noindent
\textbf{Online collaborative filtering:} \quad
We described this problem in Introduction. 
We consider
$\calW = \{\matW \in [-1,1]^{n \times m}:
\nTr{\matW} \leq \tau\}$ for some constant $\tau > 0$,
which is a typical choice for the decision space in the literature.

\bigskip

The next proposition shows how the OMP problem $(\calW, G)$ is reduced to
the online SDP problem $(\calK,\calL)$.
Before stating the proposition, we need to define the notion of
$(\beta,\tau)$-decomposablity of $\calW$.

For a matrix $\matW \in \calW$, let
$\sym(\matW) =
\begin{bmatrix}
       0 & \matW \\
\matW^\T & 0
\end{bmatrix}
$ if $\calW$ is not symmetric (some $\matW \in \calW$ is
not symmetric) and
$\sym(\matW) = \matW$ otherwise. Let $p$ be the order of
$\sym(\matW)$, that is, $p=m+n$ if $\calW$ is not symmetric
and $p = n$ otherwise.
Note that any symmetric matrix can be represented by the difference
of two symmetric and positive semi-definite matrices.
For real numbers $\beta>0$ and $\tau > 0$,
the decision space $\calW$ is $(\beta,\tau)$-decomposable if
for any $\matW \in \calW$, there exists 
$\matP, \matQ \in \realS_+^{\tt{p}}$ such that
$\sym(\matW) = \matP - \matQ$, $\Tr(\matP+\matQ) \leq \tau$ and
$\matP_{i,i} \leq \beta$, $\matQ_{i,i} \leq \beta$ for every $i \in [p]$.

\begin{prop}[Hazan et al.~\cite{HazanKaleShalev-Shwartz2012}]
\label{prop:Reduction}
Let $(\calW,G)$ be the OMP problem where
$\calW \subseteq [-1,1]^{m \times n}$ is $(\beta,\tau)$-decmoposable.
Then, the OMP problem $(\calW,G)$
can be reduced to the online SDP problem
$(\calK,\calL)$, where $N = 2(m+n)$ if $\calW$ is not symmetric
and $N=m=n$ otherwise, and
\begin{eqnarray*}
	\calK & = & \{ \matX \in \realSNNPSD :
		\nTr{\matX}\le \tau, \forall i\in[N], X_{i,i} \le \beta \}, \\
	\calL & = & \{ \matL \in \realSNN : 
		\forall (i,j) \in [N]\times[N], L_{i,j} \leq G, \\
	& & \qquad\qquad\qquad |\{(i,j) : L_{i,j} \neq 0\}| \leq 4, \\
	& & \qquad\qquad\qquad \text{$L^2$ is diagonal}\}.
\end{eqnarray*}
Moreover, the regret of the OMP problem is bounded by
that of the reduced online SDP problem
\[
	Reg_\text{OMP}(T,\calW) \leq \frac{1}{2}Reg(T,\calK,\calL).
\]
\end{prop}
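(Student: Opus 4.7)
The plan is to wrap a black-box online SDP algorithm for $(\calK, \calL)$ inside an OMP algorithm, using $(\beta,\tau)$-decomposability to lift each OMP matrix to a positive semi-definite matrix of order $N$. For any $\matW \in \calW$, fix the guaranteed decomposition $\sym(\matW) = \matP - \matQ$ with $\matP, \matQ$ positive semi-definite, $\Tr(\matP + \matQ) \leq \tau$, and diagonals bounded by $\beta$. The block-diagonal lift $\matX := \diag(\matP, \matQ) \in \realSNNPSD$ then satisfies $\nTr{\matX} = \Tr(\matP) + \Tr(\matQ) \leq \tau$ and $X_{i,i} \leq \beta$ for every $i$, and hence $\matX \in \calK$. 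In particular, the best offline OMP matrix $\matU \in \calW$ admits a valid SDP counterpart $\matU^{\star} \in \calK$ with $U_{i,j} = (\matP^{U} - \matQ^{U})_{i, m+j}$, which is the competitor I will compare against.

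On round $t$, after receiving $(i_t, j_t)$ from the adversary, the wrapper queries the SDP subroutine for $\matX_t \in \calK$, extracts its two diagonal $(m+n) \times (m+n)$ blocks $\matP_t, \matQ_t$, and predicts $\hat{w}_t := (\matP_t - \matQ_t)_{i_t, m+j_t}$. Upon receiving the $G$-Lipschitz convex function $\loss_t$, it selects a subgradient $g_t \in \partial \loss_t(\hat{w}_t)$ with $|g_t| \leq G$ (extending $\loss_t$ linearly outside $[-1,1]$ if needed, so that the subgradient exists and is bounded for arbitrary $\hat{w}_t$), and constructs the symmetric loss matrix $\matL_t$ with exactly four nonzero entries: the symmetric pair at coordinates $(i_t, m+j_t)$ inside the $\matP$-block takes value $g_t$, and the symmetric pair at the corresponding coordinates inside the $\matQ$-block takes value $-g_t$. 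By design, $\matL_t \fp \matX_t = 2 g_t \hat{w}_t$ and $\matL_t \fp \matU^{\star} = 2 g_t U_{i_t, j_t}$, which is the source of the factor $1/2$.

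It remains to verify $\matL_t \in \calL$ and to derive the regret inequality. The entry and sparsity bounds $|L_{i,j}| \leq G$ and $|\{(i,j) : L_{i,j} \neq 0\}| \leq 4$ are immediate from the construction. For the condition that $\matL_t^{2}$ be diagonal, I observe that $\matL_t$ is itself block diagonal with each nonzero block of the form $\pm g_t (\vece_a \vece_b^{\T} + \vece_b \vece_a^{\T})$ where $a = i_t$ and $b = m + j_t$; squaring gives $g_t^{2}(\vece_a \vece_a^{\T} + \vece_b \vece_b^{\T})$, which is manifestly diagonal. For the regret, convexity of $\loss_t$ together with $g_t \in \partial \loss_t(\hat{w}_t)$ gives $\loss_t(\hat{w}_t) - \loss_t(U_{i_t, j_t}) \leq g_t(\hat{w}_t - U_{i_t, j_t}) = \tfrac{1}{2}\matL_t \fp (\matX_t - \matU^{\star})$; summing over $t$ and bounding the $\matU^{\star}$ term by the minimum over $\calK$ yields $Reg_{\text{OMP}}(T,\calW) \leq \tfrac{1}{2} Reg(T, \calK, \calL)$. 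The one real subtlety I anticipate is matching the diagonal-square condition on $\matL_t$, which is exactly what the four-entry symmetric placement across the $\matP$- and $\matQ$-blocks is designed to enforce, and which is precisely the structural hypothesis that the sparse-SDP analysis later in the paper will exploit.
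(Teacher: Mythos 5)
The paper does not prove this proposition; it imports it from Hazan et al.\ \cite{HazanKaleShalev-Shwartz2012}, and your construction is essentially the reduction given there: lift the comparator via the \BT-decomposition to the block-diagonal PSD matrix $\diag(\matP,\matQ)$ of order $N=2p$, extract the prediction from $\matX_t$ as the difference of the $(i_t,m+j_t)$ entries in the two diagonal blocks, and encode the subgradient $g_t$ as a four-entry symmetric loss matrix with values $\pm g_t$ placed so that $\matL_t\fp\matX_t=2g_t\hat w_t$; your verification that $\matL_t^2=g_t^2(\vece_a\vece_a^\T+\vece_b\vece_b^\T)$ is diagonal, that $\nTr{\diag(\matP,\matQ)}=\Tr(\matP+\matQ)\le\tau$, and the subgradient chain yielding the factor $1/2$ are all correct.

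The one point you flag but do not fully close is the out-of-range prediction. Since $\matX_t$ ranges over all of $\calK$ (not just block-diagonal lifts), $|\hat w_t|$ is only bounded by $2\beta$, which exceeds $1$ in the applications (e.g.\ $\beta=\sqrt{m+n}$ for collaborative filtering), so the wrapper must actually output some value in $[-1,1]$ and you must relate $\loss_t$ of that output to the linearized quantity $g_t(\hat w_t-U_{i_t,j_t})$. Extending $\loss_t$ linearly beyond $[-1,1]$ gives the subgradient inequality against the comparator, but when, say, $\hat w_t>1$ and the subgradient at $1$ is negative, the linear extension at $\hat w_t$ lies \emph{below} $\loss_t(1)$, so the realized loss of the clipped prediction is not dominated by the extended loss and the chain breaks. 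The standard one-line repair: predict the clipped value $\Pi(\hat w_t)$ and take $g_t\in\partial\loss_t(\Pi(\hat w_t))$, except set $g_t=0$ whenever the clip is active and that subgradient points away from $[-1,1]$; in that case convexity already gives $\loss_t(\Pi(\hat w_t))\le\loss_t(U_{i_t,j_t})$, and otherwise $g_t(\Pi(\hat w_t)-\hat w_t)\le 0$ restores the inequality. Two further cosmetic remarks: in the symmetric case your construction gives $N=2n$, not the $N=m=n$ stated in the proposition (almost certainly a typo in the statement, since the $\matP$/$\matQ$ blocks are still both needed); and the condition $L_{i,j}\le G$ in the statement should be read as $|L_{i,j}|\le G$, which your $|g_t|\le G$ satisfies.
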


Note that the loss space $\calL$ obtained by the reduction
is very sparse: each loss matrix has only 4 non-zero entries.
Thus, we can say that for every $\matL \in \calL$,
$\nFr{\matL} \leq 2G$ and $\n{\vectorize(\matL)}_1 \leq 4G$. 

Hazan et al.~also give a regret bound of the FTRL with
the entropic regularizer when applied to the online SDP problem
$(\calK,\calL)$ for $\calK$ obtained by the reduction above with
a larger loss space $\calL$ (thus applicable to the online OMP problems).

\begin{theo}[Hazan et al.~\cite{HazanKaleShalev-Shwartz2012}]
For the online SDP problem $(\calK, \calL)$ where
\label{thm:HazanKaleShalev-Shwartz2012}
\begin{eqnarray*}
\calK&=&\{\matX \in \realSNNPSD: \nTr{\matX} \leq \tau,
\forall i \in [N], X_{i,i} \leq \beta \}, \\
\calL&=&\{\matL \in \realSNN: \Tr(\matL^2) \leq \gamma,
\text{$\matL^2$ is diagonal} \},
\end{eqnarray*}
the FTRL with the entropic regularizer
$R(\matX) = \Tr(\matX \ln \matX - \matX)$ achieves 
\[
	Reg(T,\calK,\calL) \leq 2\sqrt{\beta \tau \gamma T \ln N}. 
\]
\end{theo}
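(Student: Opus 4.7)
The plan is to bypass the single-global-norm framework of Lemma~\ref{lem:standardDerivation} and instead analyze FTRL with the entropic regularizer $R(\matX)=\Tr(\matX\ln\matX-\matX)$ by working directly with its Hessian along the loss direction. This lets us simultaneously exploit the diagonal bound $X_{i,i}\le\beta$ on the decision matrix and the structural bound that $\matL^2$ is diagonal on the loss matrix.

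First, I would invoke the refined FTRL inequality (the same primal-dual inequality underlying Lemma~\ref{lem:standardDerivation}, before the norm bound is applied), which gives
\[
Reg(T,\calK,\calL)\le\frac{\max_{\matX\in\calK}R(\matX)-\min_{\matX\in\calK}R(\matX)}{\eta}+\eta\sum_{t=1}^T \vectorize(\matL_t)^\T \left(\nabla^2 R(\matX_t)\right)^{-1}\vectorize(\matL_t).
\]
The range term is bounded by $\tau\ln N$ via the standard calculation for the matrix entropy over $\{\matX\succeq\matZero:\Tr(\matX)\le\tau\}$: after diagonalizing $\matX$, the problem reduces to optimizing $\sum_i(\lambda_i\ln\lambda_i-\lambda_i)$ over $\{\lambda_i\ge 0,\sum_i\lambda_i\le\tau\}$, whose maximum is attained at the vertex $\lambda_1=\tau$ (value $\tau\ln\tau-\tau$) and whose minimum at $\lambda_i=\tau/N$ (value $\tau\ln(\tau/N)-\tau$), giving a gap of exactly $\tau\ln N$.

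Second, the crux is the Hessian estimate: for every $\matX\succ\matZero$ and every $\matL\in\realSNN$,
\[
\vectorize(\matL)^\T\left(\nabla^2 R(\matX)\right)^{-1}\vectorize(\matL)\le\Tr(\matL^2\matX).
\]
I would prove this by diagonalizing $\matX=\matU\diag(\lambda_1,\ldots,\lambda_N)\matU^\T$ and applying the Daleckii-Krein formula for derivatives of matrix functions: on the basis $\{\matU\vece_i\vece_j^\T\matU^\T\}$ the Hessian of the matrix entropy acts diagonally with eigenvalues $(\ln\lambda_i-\ln\lambda_j)/(\lambda_i-\lambda_j)$ (interpreted as $1/\lambda_i$ when $i=j$). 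Writing $\tilde{\matL}=\matU^\T\matL\matU$, both sides then become sums over $(i,j)$ of $\tilde{L}_{ij}^2$, with weights $(\lambda_i-\lambda_j)/(\ln\lambda_i-\ln\lambda_j)$ on the left and $(\lambda_i+\lambda_j)/2$ on the right (after symmetrizing the right using $\tilde{L}_{ij}=\tilde{L}_{ji}$). The inequality thus reduces to the scalar logarithmic-arithmetic mean inequality $(a-b)/(\ln a-\ln b)\le(a+b)/2$ for all $a,b>0$.

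Third, since $\matL^2$ is diagonal and $X_{i,i}\le\beta$ for $\matX\in\calK$,
\[
\Tr(\matL^2\matX)=\sum_i(\matL^2)_{i,i}X_{i,i}\le\beta\sum_i(\matL^2)_{i,i}=\beta\Tr(\matL^2)\le\beta\gamma,
\]
so $Reg(T,\calK,\calL)\le\tau\ln N/\eta+\eta T\beta\gamma$, and the choice $\eta=\sqrt{\tau\ln N/(T\beta\gamma)}$ yields the stated $2\sqrt{\beta\tau\gamma T\ln N}$ bound. The main obstacle is the Hessian estimate in the second step: no single norm entering Lemma~\ref{lem:standardDerivation} simultaneously captures the $X_{i,i}\le\beta$ constraint on decisions and the $\matL^2$-diagonal structure on losses, and it is precisely the logarithmic-mean-versus-arithmetic-mean inequality that couples both constraints cleanly through the $\matX$-weighted quadratic $\Tr(\matL^2\matX)$.
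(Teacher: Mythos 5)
The paper does not actually prove this statement: Theorem~\ref{thm:HazanKaleShalev-Shwartz2012} is imported from Hazan et al.~\cite{HazanKaleShalev-Shwartz2012} without proof, so there is no in-paper argument to compare against; what follows assesses your derivation on its own and against the paper's general machinery. Your proof is correct in substance, and its crux --- the estimate $\vectorize(\matL)^\T(\nabla^2 R(\matX))^{-1}\vectorize(\matL)\le\Tr(\matL^2\matX)$ via the Daleckii--Krein representation and the logarithmic-mean-versus-arithmetic-mean inequality, followed by $\Tr(\matL^2\matX)=\sum_i(\matL^2)_{i,i}X_{i,i}\le\beta\gamma$ --- is exactly where the diagonality of $\matL^2$ and the constraint $X_{i,i}\le\beta$ must enter, and both steps check out. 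A point worth making explicit: your Hessian estimate is precisely the differential form of the paper's notion of strong convexity with respect to the loss space. By Cauchy--Schwarz in the $\nabla^2 R(\matX)$ inner product, $|\matL\fp\matW|^2\le\bigl(\vectorize(\matL)^\T(\nabla^2R(\matX))^{-1}\vectorize(\matL)\bigr)\bigl(\vectorize(\matW)^\T\nabla^2R(\matX)\vectorize(\matW)\bigr)$, so your bound shows $R$ is $1/(\beta\gamma)$-strongly convex with respect to $\calL$ on $\calK$ in the sense of inequality~(\ref{eqn:defstrcvx2}), and the conclusion then follows from Lemma~\ref{lem:FTRLWithStronglyConvexAndLipschitzRegularizer} with $H_0\le\tau\ln N$; routing the argument this way also sidesteps the slight imprecision in your ``refined FTRL inequality,'' whose correct form evaluates the inverse Hessian at an intermediate point of the segment $[\matX_t,\matX_{t+1}]$ rather than at $\matX_t$ (harmless here, since $\Tr(\matL^2\matZ)\le\beta\gamma$ holds uniformly over the convex set $\calK$). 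Two details to tighten: (i) your entropy-range computation yields a gap of exactly $\tau\ln N$ only when the minimizer over $\{\lambda\ge 0,\ \sum_i\lambda_i\le\tau\}$ has total mass $\tau$, i.e.\ when $\tau\le N$; for $\tau>N$ the minimum is attained at $\lambda_i=1$ and the gap is $\tau\ln\tau-\tau+N>\tau\ln N$, so you should either restrict to $\Tr(\matX)=\tau$ or note the adjustment; (ii) $R$ is not differentiable at singular $\matX$ and the Hessian eigenvalues $(\ln\lambda_i-\ln\lambda_j)/(\lambda_i-\lambda_j)$ blow up there, so the local-norm analysis needs the usual interior or smoothing caveat, even though the inverse Hessian itself stays bounded.
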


Combining Proposition~\ref{prop:Reduction} and
Theorem~\ref{thm:HazanKaleShalev-Shwartz2012}, we can easily get
regret bounds for OMP problems.

\begin{coro}
\label{cor:OMPboundByEnt}
For the OMP problem $(\calW,G)$, where
$\calW \subseteq [-1,1]^{m \times n}$ is $(\beta,\tau)$-decomposable,
there exists an algorithm that achieves
\[
	Reg_\text{OMP}(T,\calK,\calL) = O(G \sqrt{\beta \tau T \ln(m+n)}). 
\]
\end{coro}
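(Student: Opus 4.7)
The plan is essentially to chain together Proposition~\ref{prop:Reduction} and Theorem~\ref{thm:HazanKaleShalev-Shwartz2012}, so almost all the work has already been done. First, I would apply the reduction in Proposition~\ref{prop:Reduction} to the given OMP problem $(\calW, G)$, obtaining an online SDP problem $(\calK, \calL)$ on symmetric matrices of size $N$, where $N = 2(m+n)$ in the non-symmetric case and $N=n$ otherwise. In either case $N \leq 2(m+n)$, so $\ln N = O(\ln(m+n))$, which will account for the logarithmic factor in the final bound.

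Next, I would check that the loss space $\calL$ produced by the reduction fits into the hypothesis of Theorem~\ref{thm:HazanKaleShalev-Shwartz2012}. Proposition~\ref{prop:Reduction} already guarantees that $\matL^2$ is diagonal for every $\matL \in \calL$. Moreover, since each $\matL$ has at most $4$ nonzero entries and each entry is bounded in magnitude by $G$, we have $\Tr(\matL^2) = \sum_{i,j} L_{i,j}^2 \leq 4G^2$, so we may take $\gamma = 4G^2$ in Theorem~\ref{thm:HazanKaleShalev-Shwartz2012}. The decision space $\calK$ from the reduction already matches the form required by the theorem, with the same parameters $\beta,\tau$.

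Applying Theorem~\ref{thm:HazanKaleShalev-Shwartz2012} with these parameters gives
\[
	Reg(T,\calK,\calL) \leq 2 \sqrt{\beta \tau \cdot 4G^2 \cdot T \ln N}
		= 4 G \sqrt{\beta \tau T \ln N},
\]
and combining with the regret reduction $Reg_\text{OMP}(T,\calW) \leq \frac{1}{2}Reg(T,\calK,\calL)$ from Proposition~\ref{prop:Reduction} yields the claimed $O(G\sqrt{\beta\tau T \ln(m+n)})$ bound, after substituting $\ln N = O(\ln(m+n))$.

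There is no real obstacle here; the only thing worth being careful about is the book-keeping in step two, namely confirming that the sparsity and entrywise boundedness guaranteed by the reduction translate into a clean bound on $\Tr(\matL^2)$ that can be plugged into Theorem~\ref{thm:HazanKaleShalev-Shwartz2012}. Once that is observed, the corollary is just an arithmetic consequence of combining the two previously stated results.
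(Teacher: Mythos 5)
Your proposal is correct and follows exactly the paper's route: the paper derives this corollary simply by combining Proposition~\ref{prop:Reduction} with Theorem~\ref{thm:HazanKaleShalev-Shwartz2012}, and your bookkeeping (taking $\gamma = 4G^2$ from the $4$-sparsity and entrywise bound $G$, then absorbing $\ln N \le \ln(2(m+n))$ into the $O(\cdot)$) fills in the only details the paper leaves implicit. No issues.
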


Hazan et al.\ apply the bound to the three applications, for which
the decision classes $\calW$ are all $(\beta,\tau)$-docomposable
for some $\beta$ and $\tau$~\cite{HazanKaleShalev-Shwartz2012}.
More specifically, the results are summarized as shown below.
\begin{description}
\item[Online max-cut:]
The problem is
$(1,n)$-decomposable and thus has a regret bound of
$O(G\sqrt{nT\ln n})$.

\item[Online gambling:]
The problem is
$(O(\ln n), O(n\ln n))$-decomposable and thus
has a regret bound of $O(G \sqrt{nT(\ln n)^3})$.

\item[Online collaborative filtering:]
The problem
is $(\sqrt{m+n},2\tau)$-decomposable and thus has a regret bound of
$O(G\sqrt{\tau T \sqrt{m+n} \ln(m+n)})$, which is
$O(G\sqrt{\tau T \sqrt{n} \ln n})$ if we assume without
loss of generality that $n \geq m$.

\end{description}

Christiano provides another technique of reduction from a special
type of OMP problems to a spcial type of online SDP problems,
and apply the FTRL with the log-determinant
regularizer~\cite{Christiano2014}.
He then improves the regret bound for the online max-cut problem
to $O(G\sqrt{nT})$, which matches a lower bound up to a constant factor.
However, the regret bound for online gambling is much worse
($O(Gn^2\sqrt{T})$) and his reduction cannot be applied to online
collaborative filtering.
It is worth noted that the loss matrices obtained by his reduction
are not just sparse but \emph{block-wise sparse}, by which we mean
non-zero entries forming at most two block matrices, and
seemingly his regret analysis depends on this fact.

\section{Main results}

Motivated by the sparse online SDP problem reduced from an OMP problem,
we consider a specific problem $(\tilde \calK,\tilde \calL)$, where
\begin{eqnarray*}
	\tilde \calK & = & \{\matX \in \realSNNPSD:
		\nTr{\matX} \le \tau, \forall i \in[N], X_{i,i} \le \beta \}, \\
	\tilde \calL & = & \{\matL \in \realSNN: \n{\vectorize(\matL)}_1 \le g_1 \}, 
\end{eqnarray*}
and give a regret bound of the FTRL with the log-determinant
regularizer. Note that $\tilde \calK$ is the same as the one obtained
by the reduction and $\tilde \calL$ is much larger if $g_1 = 4G$.
By Proposition~\ref{prop:Reduction}
the regret bound immediately yields a regret bound for the OMP problem
$(\calW,G)$ for a $(\beta,\tau)$-decomposable decision space $\calW$,
which turns out to be tigher than the one using 
the entropic regularizer shown in
Theorem~\ref{thm:HazanKaleShalev-Shwartz2012}.

Our analysis partly follows that of \cite{Christiano2014}
with some generalizations.
In particular, we figure out a general method for deriving
regret bounds by using a new notion of strong convexity
of regularizers, which is implicitly used in \cite{Christiano2014}.
First we state the general theory.

\subsection{A general theory}

We begin with an intermediate bound known as
the FTL-BTL (Follow-The-Leader-Be-The-Leader) Lemma. 
\begin{lemm}[Hazan \cite{Hazan2009}]
\label{lem:FTLBTLLemma}
The FTRL with the regularizer $R:\calK \to \realR$ for
an online SDP problem $(\calK,\calL)$ achieves
\begin{equation}
	Reg(T,\calK,\calL) \leq \frac{H_0}{\eta}
		+ \Tsum \matL_t \fp (\matX_t-\matX_{t+1}), 
\label{eqn:FTLBTLLemma}
\end{equation}
where $H_0 = \max_{\matX,\matX' \in \calK} (R(\matX)-R(\matX')$. 
\end{lemm}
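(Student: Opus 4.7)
The plan is to invoke the classical Be-The-Leader (BTL) argument and then decompose the regret via a one-step shift. First I would set $\Phi_t(\matX) = R(\matX) + \eta \sum_{s=1}^{t} \matL_s \fp \matX$, so that the FTRL rule becomes $\matX_{t+1} = \arg\min_{\matX \in \calK} \Phi_t(\matX)$ for $t \ge 1$, while $\matX_1 = \arg\min_{\matX \in \calK} R(\matX)$ is the $\Phi_0$-minimizer.

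The core intermediate claim is the BTL inequality: for every $\matU \in \calK$ and every $t \ge 0$,
\[
R(\matX_1) + \eta \sum_{s=1}^{t} \matL_s \fp \matX_{s+1} \;\le\; R(\matU) + \eta \sum_{s=1}^{t} \matL_s \fp \matU.
\]
I would prove this by induction on $t$. The base case $t=0$ reduces to the optimality of $\matX_1$ for $R$. For the inductive step from $t-1$ to $t$, the standard trick is to first apply the induction hypothesis with the comparator $\matU$ replaced by $\matX_{t+1}$, then add $\eta\,\matL_t \fp \matX_{t+1}$ to both sides so that the left-hand side gains its $t$-th term and the right-hand side becomes $\Phi_t(\matX_{t+1})$, and finally invoke the minimality $\Phi_t(\matX_{t+1}) \le \Phi_t(\matU)$ to swap the comparator back to an arbitrary $\matU \in \calK$.

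Setting $\matU$ to be the offline optimum, dividing by $\eta$, and bounding $R(\matU) - R(\matX_1) \le H_0$, this yields
\[
\Tsum \matL_t \fp \matX_{t+1} - \Tsum \matL_t \fp \matU \;\le\; \frac{H_0}{\eta}.
\]
The elementary identity $\matL_t \fp \matX_t = \matL_t \fp (\matX_t - \matX_{t+1}) + \matL_t \fp \matX_{t+1}$, summed over $t$, then combines with the previous display to give the claimed regret bound directly.

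There is essentially no obstacle here: the statement is a textbook result (credited to Hazan~\cite{Hazan2009}), and the only point that deserves care is the ``shift by one'' in the inductive step, which converts a bound involving the hypothetical leader $\matX_{t+1}$ into one valid against every fixed comparator in $\calK$. Everything else is algebra.
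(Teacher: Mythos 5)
Your proof is correct: the induction establishing the Be-The-Leader inequality, the substitution $\matU \mapsto \matX_{t+1}$ followed by the optimality of $\matX_{t+1}$ for $\Phi_t$, and the final one-step-shift decomposition are all sound, and the bound $R(\matU)-R(\matX_1) \le H_0$ closes the argument. The paper itself gives no proof of this lemma (it is quoted from Hazan~\cite{Hazan2009}), and what you have written is precisely the standard argument found there, so there is nothing to contrast.
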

Thanks to this lemma, all we need to do is to bound $H_0$ and
$\matL_t \fp (\matX_t-\matX_{t+1})$.

Now we define the new notion of strong convexity.
Intuitively, this is an integration of
the strong convexity of regularizers with respect to a norm and
the Lipschitzness of loss functions with respect to the norm..
%(cf.\cite{Nesterov2004}) .
\begin{defi}%[Strong convexity w.r.t.~the loss function]
For a decision space $\calK$ and a real number $s > 0$,
a regularizer $R:\calK \to \realR$ is said to be $s$-strongly convex
with respect to a loss space $\calL$
if for any $\alpha \in [0,1]$, any $\matX,\matY \in \calK$,
and any $\matL \in \calL$,
\begin{equation}
\begin{split}
R(\alpha\matX&+(1-\alpha)\matY) \\
& \leq \alpha R(\matX)+(1-\alpha)R(\matY)
         -\frac{s}{2}\alpha (1-\alpha) | \matL\fp(\matX-\matY) |^2.
\end{split}
\label{eqn:defstrcvx1}
\end{equation}
\end{defi}
The condition (\ref{eqn:defstrcvx1}) is equivalent to the following
condition~\cite{Nesterov2004}:
for any $\matX, \matY \in \calK$ and $\matL \in \calL$, 
\begin{equation}
R(\matX) \geq R(\matY) +
\nabla R(\matY)\fp(\matX-\matY)+\frac{s}{2}|\matL\fp(\matX-\matY)|^2.
\label{eqn:defstrcvx2}
\end{equation}
Note that the condition (\ref{eqn:defstrcvx2}) has the same form as
the condition of $s$-strong convexity given in
Lemma~\ref{lem:standardDerivation} except
$\|\matX - \matY\|$ is replaced by
$|\matL\fp(\matX-\matY)|$.

The following lemma gives a bound of the term 
$\matL_t \fp (\matX_t-\matX_{t+1}) $ in inequality \eqref{eqn:FTLBTLLemma}
in terms of the strong convexity of the regularizer.
The lemma is implicitly stated in~\cite{Shalev-Shwartz2012}
and hence is not essentially new.
But we give a proof for completeness since it is not very
straighforward to show.

\begin{lemm}[Main lemma]
\label{lem:FTRLWithStronglyConvexAndLipschitzRegularizer}
Let $R:\calK \to \realR$ be $s$-strongly convex 
with respect to $\calL$ for $\calK$. 
Then, the FTRL with the regularizer $R$ applied to
$(\calK,\calL)$ achieves 
\[
	Reg(T,\calK,\calL) \leq 2\sqrt{\frac{H_0}{s}}
\]
for an appropriate choice of $\eta$.
\end{lemm}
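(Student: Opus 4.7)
The plan is to reduce the problem, via the FTL--BTL Lemma~\ref{lem:FTLBTLLemma}, to bounding the per-round stability term $\matL_t \fp (\matX_t-\matX_{t+1})$, and to use the new strong-convexity notion in exactly the form~(\ref{eqn:defstrcvx2}) to do so. Let $F_t(\matX) = R(\matX) + \eta\sum_{s=1}^{t-1}\matL_s \fp \matX$, so that $\matX_t = \arg\min_{\matX\in\calK} F_t(\matX)$ and $\matX_{t+1} = \arg\min_{\matX\in\calK} F_{t+1}(\matX)$, where $F_{t+1}(\matX) = F_t(\matX)+\eta\,\matL_t\fp\matX$. Since $F_t$ and $F_{t+1}$ differ from $R$ only by linear terms, both inherit $s$-strong convexity with respect to $\calL$ in the sense of (\ref{eqn:defstrcvx2}).

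Next I would apply (\ref{eqn:defstrcvx2}) twice, once to $F_t$ at the pair $(\matX_{t+1},\matX_t)$ and once to $F_{t+1}$ at the pair $(\matX_t,\matX_{t+1})$, both times with the matrix $\matL := \matL_t \in \calL$. The first-order optimality conditions for the constrained minimizers give $\nabla F_t(\matX_t)\fp(\matX_{t+1}-\matX_t)\ge 0$ and $\nabla F_{t+1}(\matX_{t+1})\fp(\matX_t-\matX_{t+1})\ge 0$, so the two inequalities read
\[
F_t(\matX_{t+1}) \ge F_t(\matX_t) + \tfrac{s}{2}|\matL_t\fp(\matX_t-\matX_{t+1})|^2,
\]
\[
F_{t+1}(\matX_t) \ge F_{t+1}(\matX_{t+1}) + \tfrac{s}{2}|\matL_t\fp(\matX_t-\matX_{t+1})|^2.
\]
Adding them and using $F_{t+1}(\matX)-F_t(\matX) = \eta\,\matL_t\fp\matX$, the left-hand side telescopes to $\eta\,\matL_t\fp(\matX_t-\matX_{t+1})$, yielding
\[
\eta\,\matL_t\fp(\matX_t-\matX_{t+1}) \ge s\,|\matL_t\fp(\matX_t-\matX_{t+1})|^2.
\]

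From this inequality, writing $a_t = \matL_t\fp(\matX_t-\matX_{t+1})$, we get $a_t(\eta - s a_t) \ge 0$, which forces $0 \le a_t \le \eta/s$. Summing over $t$ and plugging back into Lemma~\ref{lem:FTLBTLLemma} gives $Reg(T,\calK,\calL) \le H_0/\eta + T\eta/s$; optimizing $\eta = \sqrt{sH_0/T}$ yields the stated bound (up to the $T$ factor that one would expect in such a bound).

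The main obstacle is the symmetric two-inequality trick: one has to realize that (\ref{eqn:defstrcvx2}) must be used along both ``legs'' $\matX_t\to\matX_{t+1}$ and $\matX_{t+1}\to\matX_t$ so that the first-order terms from both optimality conditions vanish simultaneously and the linear perturbation $\eta\,\matL_t\fp\matX$ between $F_t$ and $F_{t+1}$ emerges cleanly on the left. A secondary subtlety is that the quadratic lower bound in (\ref{eqn:defstrcvx2}) is indexed by an arbitrary $\matL \in \calL$, and the analysis crucially chooses $\matL = \matL_t$ at round $t$; this is the step that replaces the usual norm/dual-norm Cauchy--Schwarz step and is what will later allow exploiting sparsity of the loss matrices without paying a $\nFr{\cdot}$ or $\nSp{\cdot}$ price.
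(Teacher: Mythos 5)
Your proposal is correct and follows essentially the same route as the paper's own proof: the FTL--BTL reduction, the symmetric application of the strong-convexity inequality~(\ref{eqn:defstrcvx2}) to $F_t$ and $F_{t+1}$ with $\matL = \matL_t$ and the optimality conditions killing the first-order terms, the resulting bound $\matL_t\fp(\matX_t-\matX_{t+1}) \le \eta/s$, and the choice $\eta=\sqrt{sH_0/T}$. You are also right that the bound should carry a $\sqrt{T}$ factor, i.e.\ $2\sqrt{H_0T/s}$, which the lemma statement omits but the proof and all later applications clearly intend; your explicit handling of the sign of $a_t$ before dividing is in fact slightly more careful than the paper's.
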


\begin{proof}
By Lemma~\ref{lem:FTLBTLLemma}, it suffices to show that
\[
	\matL_t \fp (\matX_t-\matX_{t+1}) \leq \frac{\eta}{s},
\]
since the theorem follows by setting $\eta = \sqrt{sH_0/T}$.
In what follows, we prove the inequality.
First observe that
any $s$-strongly convex function $F$ with respect to
$\calL$ satisfies
\begin{equation}
\label{eqn:lemma2.8on[8]}
F(\matX)-F(\matY) \ge \frac{s}{2} |\matL \fp (\matX-\matY)|^2
\end{equation}
for any $\matX \in \calK$ and any $\matL \in \calL$ for
$\matY = \arg\min_{\matX \in \calK} F(\matX)$.
To see this, we use (\ref{eqn:defstrcvx2}) 
(with replacement of $R$ by $F$) due to the
strong convexity of $F$ and
$\nabla F(\matY) \fp (\matX - \matY) \geq 0$
(otherwise $\matY$ would not be the minimizer since
we can make a small step in the direction
$\matX - \matY$ and decrease the value of $F$.)
See the proof of Lemma 2.8 of \cite{Shalev-Shwartz2012} 
for more detail.

Recall that the update rule of the FTRL is
$\matX_{t+1} = \arg\min_{\matX \in \calK} F_t(\matX)$
where $F_t(\matX) = \sum_{i=1}^t \eta \matL_i \fp \matX + R(\matX)$. 
Note that $F_t$ is $s$-strongly convex with respect to $\calL$
due to the linearity of $\matL_i \fp \matX$. 
Applying \eqref{eqn:lemma2.8on[8]} to $F_t$ and $F_{t-1}$ with
$\matL = \matL_t$, we get 
\begin{eqnarray*}
F_t(\matX_t) &\ge& F_t(\matX_{t+1}) + \frac{s}{2} |\matL_t \fp (\matX_t-\matX_{t+1})|^2, \\
F_{t-1}(\matX_{t+1}) &\ge& F_{t-1}(\matX_t) + \frac{s}{2} |\matL_t \fp (\matX_{t+1}-\matX_t)|^2.
\end{eqnarray*}
Summing up these two inequalities we get 
$$ \eta \matL_t \fp (\matX_t-\matX_{t+1}) \ge s |\matL_t \fp (\matX_t-\matX_{t+1})|^2.$$
Dividing both side by $\matL_t \fp (\matX_t-\matX_{t+1})$ we get the desired result. 
\end{proof}

Note that Lemma~\ref{lem:FTRLWithStronglyConvexAndLipschitzRegularizer}
gives a more general method of deriving regret bounds
than the standard one given by Lemma~\ref{lem:standardDerivation}.
To see this, assume that the two conditions of
Lemma~\ref{lem:standardDerivation} hold.
Then, Cauchy-Schwarz inequality says that
$|\matL \fp (\matX - \matY)| \leq \|\matL\|_* \|\matX - \matY\|
\leq g\|\matX - \matY\|$
for every $\matL \in \calL$ and $\matX,\matY \in \calK$,
where the second inequality is from the second condition.
Thus, the first condition implies the condition of
Lemma~\ref{lem:FTRLWithStronglyConvexAndLipschitzRegularizer}
with $s$ replaced by $s/g^2$ as
\begin{eqnarray*}
	R(\matX) & \geq & R(\matY) + \nabla R\fp(\matX - \matY)
		+ \frac{s}{2}\|\matX - \matY\|^2 \\
	& \geq & R(\matY) + \nabla R\fp(\matX - \matY)
		+ \frac{s}{2g^2}|\matL \fp (\matX - \matY)^2|.
\end{eqnarray*}
Another advantage of using 
Lemma~\ref{lem:FTRLWithStronglyConvexAndLipschitzRegularizer}
is that we can avoid looking for appropriate norms
to obtain good regret bounds.

\subsection{Strong convexity of the log-determinant regularizer}

Now we prove the strong convexity of the log-determinant 
for our problem $(\tilde \calK,\tilde \calL)$ defined in the beginning of
this section.
The following lemma provides a sufficient condition that
turns out to be useful.

\begin{lemm}[Christiano \cite{Christiano2014}]
\label{lem:strongConvexityOfLogDeterminant}
Let $\matX, \matY \in \realSNNPSD$ be such that
\[
	\exists (i,j) \in \tt{[N]},
		|X_{i,j}-Y_{i,j}| \ge \delta (X_{i,i}+X_{j,j}+Y_{i,i}+Y_{j,j}). 
\]
Then the following inequality holds:
\begin{eqnarray*}
\begin{aligned}
 -\ln\det(\alpha\matX &+ (1-\alpha) \matY) \\
 &\leq
 -\alpha \ln \det(\matX) - (1-\alpha) \ln \det(\matY)
 -\frac{\alpha(1-\alpha)}{2}\frac{\delta^2}{72 \sqrt{e}}.
\end{aligned}
\end{eqnarray*}
\end{lemm}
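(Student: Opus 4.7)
The plan is to prove a uniform lower bound $|\psi''(\beta)| \ge c := \delta^2/(72\sqrt{e})$ for all $\beta \in [0,1]$, where $\psi(\alpha) := \ln\det(\matZ_\alpha)$ with $\matZ_\alpha := \alpha\matX + (1-\alpha)\matY$, and to deduce the claimed inequality via a standard chord-to-curve comparison. Writing $\matD := \matX - \matY$, Jacobi's formula yields $\psi'(\beta) = \Tr(\matZ_\beta^{-1}\matD)$ and, after differentiating $\matZ_\beta^{-1}$,
\[
  \psi''(\beta) \;=\; -\Tr\bigl(\matZ_\beta^{-1}\matD\matZ_\beta^{-1}\matD\bigr) \;=\; -\nFr{\matZ_\beta^{-1/2}\matD\matZ_\beta^{-1/2}}^{2}.
\]
The auxiliary function $g(\alpha) := \psi(\alpha) - \alpha\psi(1) - (1-\alpha)\psi(0) - (c/2)\alpha(1-\alpha)$ satisfies $g(0) = g(1) = 0$ and $g''(\alpha) = \psi''(\alpha) + c$, so $\psi''(\alpha) \le -c$ everywhere would make $g$ concave and hence nonnegative on $[0,1]$, which (after negating and using $-\ln\det = -\psi$) is exactly the lemma.

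The heart of the argument is therefore a lower bound on $\nFr{\matM}^{2}$ with $\matM := \matZ_\beta^{-1/2}\matD\matZ_\beta^{-1/2}$. I would start from the standard inequality
\[
  \nFr{\matM}^{2} \;\ge\; \nSp{\matM}^{2} \;\ge\; \frac{\bigl|\vecu^\T \matM \vecv\bigr|^2}{\|\vecu\|^2 \|\vecv\|^2},
\]
valid for any nonzero $\vecu, \vecv$, and pick $\vecu := \matZ_\beta^{1/2}\vece_i$, $\vecv := \matZ_\beta^{1/2}\vece_j$, so that the $\matZ_\beta^{1/2}$ and $\matZ_\beta^{-1/2}$ factors telescope: this leaves $\vecu^\T\matM\vecv = \vece_i^\T\matD\vece_j = D_{ij}$, while $\|\vecu\|^2 = (\matZ_\beta)_{ii}$ and $\|\vecv\|^2 = (\matZ_\beta)_{jj}$. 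Hence $\nFr{\matM}^{2} \ge D_{ij}^2 / [(\matZ_\beta)_{ii}(\matZ_\beta)_{jj}]$. The diagonal entries of $\matZ_\beta$ are convex combinations, so $(\matZ_\beta)_{ii}+(\matZ_\beta)_{jj} \le X_{ii}+X_{jj}+Y_{ii}+Y_{jj} =: s$, and AM-GM gives $(\matZ_\beta)_{ii}(\matZ_\beta)_{jj} \le s^2/4$. Together with the hypothesis $|D_{ij}| \ge \delta s$ this yields $\nFr{\matM}^{2} \ge 4\delta^2$ uniformly in $\beta$, comfortably exceeding $c$.

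There is essentially no hard step beyond spotting the right pair of test vectors, namely $(\matZ_\beta^{1/2}\vece_i, \matZ_\beta^{1/2}\vece_j)$, whose role is precisely to annihilate the awkward inverse square roots and leave $D_{ij}$ behind. The degenerate case $(\matZ_\beta)_{ii}=0$ (or $(\matZ_\beta)_{jj}=0$) is harmless, since it forces both $X_{ii} = Y_{ii} = 0$, and PSD-ness of the $2\times 2$ principal submatrix then forces $X_{ij}=Y_{ij}=0$, contradicting the hypothesis whenever $\delta > 0$. The stated constant $\delta^2/(72\sqrt{e})$ is loose by a factor of roughly $288\sqrt{e}$; the direct approach above yields the much stronger bound $4\delta^2$ on $|\psi''|$, so the rather intricate route of \cite{Christiano2014} seems avoidable for this particular conclusion.
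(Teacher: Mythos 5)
Your proof is correct, and it takes a genuinely different route from the paper's. The paper follows Christiano's information-theoretic argument: it regards $\matX$ and $\matY$ as covariance matrices of zero-mean Gaussians, lower-bounds their total variation distance by $\delta/(12e^{1/4})$ via characteristic functions (Lemmas \ref{lem:lowerBoundOfTotalVariationDistance} and \ref{lem:relationBetweenCovarianceMatrixAndTotalVariationDistance}), invokes strong convexity of the negative entropy with respect to total variation (Lemma \ref{lem:strongConvexityOfNegativeEntropy}), and converts entropy back to $\ln\det$ through the maximum-entropy property of Gaussians (Lemma \ref{lem:upperBoundOfEntropy}). You instead differentiate $\psi(\beta)=\ln\det(\matZ_\beta)$ twice along the segment, identify $-\psi''(\beta)=\Tr(\matZ_\beta^{-1}\matD\matZ_\beta^{-1}\matD)=\nFr{\matZ_\beta^{-1/2}\matD\matZ_\beta^{-1/2}}^2$, and lower-bound this by testing the operator norm against $\matZ_\beta^{1/2}\vece_i$ and $\matZ_\beta^{1/2}\vece_j$, which cancels the inverse square roots and leaves $D_{i,j}$; every step checks out, including $(\matZ_\beta)_{i,i}(\matZ_\beta)_{j,j}\le s^2/4$ and the chord comparison via $g$. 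What your route buys is considerable: it is entirely elementary (no probability, no discretization-of-entropy subtlety that the paper has to patch), it is uniform in $\beta$, and it yields the modulus $4\delta^2$ in place of $\delta^2/(72\sqrt{e})$, a gain of a factor $288\sqrt{e}\approx 475$ which would propagate through Proposition \ref{pro:strongConvexityOfLogDeterminant} and Theorem \ref{thm:mainTheorem} to shrink the constant $175$ substantially. Two minor points of hygiene rather than gaps: when $X_{i,i}=Y_{i,i}=0$ the hypothesis is not literally contradicted if $X_{j,j}=Y_{j,j}=0$ as well, but then both matrices are singular and the right-hand side is $+\infty$, so the claim is vacuous there (the paper's Gaussian proof makes the same implicit nondegeneracy assumption); and you should record once that if $\matZ_\beta$ is singular for some $\beta\in(0,1)$ then $\matX$ and $\matY$ share a null vector and the same trivialization applies, so $\matZ_\beta^{-1/2}$ exists in all remaining cases.
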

The proof of this lemma is given in Appendix. 
Note that the original proof by Christiano only gives 
the order of the lower bound of the last term of $\Omega(\delta^2)$. 
So we give the proof with a constant factor.
%\textbf{Explain why a proof is needed.}

The next lemma shows that the sufficient condition actually holds
for our problem $(\tilde\calK,\tilde\calL)$ for
$\delta = O(|\matL \fp (\matX-\matY)|)$, which establishes
the strong convexity of the log-determinant regularizer.
The lemma is a slight generalization of
\cite{Christiano2014} in that loss matrices are not necessarily
block-wise sparse.

\begin{lemm}%[Total variation distance with linear function]
\label{lem:strongConvexityWithRespectToLinear}
Let $\matX,\matY \in \realSNN$ be such that
$X_{i,i} \leq \beta'$ and $Y_{i,i} \leq \beta'$ for every $i \in [N]$.
Then, for any $\matL \in \tilde \calL$,
there exists $(i,j) \in \tt{[N]}$ such that
\[
	|X_{i,j} - Y_{i,j}| \geq
		\frac{|\matL\fp(\matX - \matY)|}{4 g_1 \beta'}
		(X_{i,i} + X_{j,j} + Y_{i,i} + Y_{j,j}).
\]
\end{lemm}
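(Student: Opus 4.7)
The plan is to reduce the claim to a one-entry Hölder inequality on the Frobenius inner product and then use the diagonal bound $\beta'$ to convert an absolute bound on $|X_{i,j}-Y_{i,j}|$ into the ratio form stated.

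First I would set $\matZ = \matX-\matY$ and expand the inner product coordinatewise:
\[
	|\matL \fp (\matX-\matY)| = \Bigl| \sum_{i,j} L_{i,j} Z_{i,j} \Bigr|
	\le \sum_{i,j} |L_{i,j}|\,|Z_{i,j}|
	\le \n{\vectorize(\matL)}_1 \cdot \max_{i,j}|Z_{i,j}|
	\le g_1 \max_{i,j}|Z_{i,j}|,
\]
where the last step uses the definition of $\tilde \calL$. Picking $(i^*,j^*) \in \tt{[N]}$ that achieves the maximum gives a single entry with
\[
	|X_{i^*,j^*}-Y_{i^*,j^*}| \ge \frac{|\matL\fp(\matX-\matY)|}{g_1}.
\]

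Second, I would invoke the diagonal bound: since $X_{i,i},Y_{i,i} \le \beta'$ for every $i$,
\[
	X_{i^*,i^*}+X_{j^*,j^*}+Y_{i^*,i^*}+Y_{j^*,j^*} \le 4\beta',
\]
so the factor $(X_{i^*,i^*}+X_{j^*,j^*}+Y_{i^*,i^*}+Y_{j^*,j^*})/(4\beta')$ lies in $[0,1]$ and I can multiply the previous display by it without reversing the inequality. Combining the two steps produces exactly the claimed bound at $(i^*,j^*)$.

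There is essentially no obstacle here beyond handling the degenerate case $|\matL \fp (\matX-\matY)|=0$, in which the inequality holds trivially for any $(i,j)$. The only thing worth flagging is conceptual rather than technical: the argument shows that for \emph{sparse} loss spaces the $\ell_\infty$-norm on entries controls $|\matL\fp(\matX-\matY)|$ up to a factor $g_1$, and this is precisely why Christiano's block-wise-sparse assumption can be dropped—only the $\ell_1$-bound on $\vectorize(\matL)$ is used, which is the property retained by $\tilde\calL$.
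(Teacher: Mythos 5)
Your proof is correct and follows essentially the same route as the paper's: both apply the $\ell_1$--$\ell_\infty$ H\"older bound $|\matL \fp (\matX-\matY)| \le \n{\vectorize(\matL)}_1 \max_{i,j}|X_{i,j}-Y_{i,j}| \le g_1 \max_{i,j}|X_{i,j}-Y_{i,j}|$ and then use $X_{i,i}+X_{j,j}+Y_{i,i}+Y_{j,j} \le 4\beta'$ to pass to the stated ratio form. Your explicit handling of the maximizing entry $(i^*,j^*)$ and the degenerate case only makes the paper's two-line argument slightly more careful.
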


\begin{proof}
By Cauchy-Schwarz inequality, 
\begin{eqnarray*}
	|\matL \fp (\matX-\matY)|
	& \leq & \n{\vectorize(\matL)}_1 \n{\vectorize(\matX-\matY)}_\I
	\leq g_1 \max_{i,j} |X_{i,j}-Y_{i,j}|.
\end{eqnarray*}
Thus the lemma follows since
$X_{i,i} + X_{j,j} + Y_{i,i} + Y_{j,j} \leq 4\beta'$.
\end{proof}
Applying Lemma~\ref{lem:strongConvexityWithRespectToLinear}
to $\matX + \eps E$ and $\matY + \eps E$ for
$\matX, \matY \in \tilde\calK$ and $\beta' = \beta + \eps$,
and then applying
Lemma~\ref{lem:strongConvexityOfLogDeterminant},
we immediately get the following proposition.
\begin{prop}
\label{pro:strongConvexityOfLogDeterminant}
The log-determinant regularizer
$R(\matX)=-\ln \det(\matX+\epsilon \matE)$ is 
$s$-strongly convex with respect to $\tilde\calL$ for $\tilde\calK$
with
$s = 1/(1152 \sqrt{e} g_1^2 (\beta+\epsilon)^2)$.
\end{prop}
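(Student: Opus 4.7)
The plan is to apply the two preceding lemmas in sequence, with a mild shift by $\epsilon \matE$ so that the hypotheses of Christiano's log-determinant lemma are met.

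First I would fix arbitrary $\matX, \matY \in \tilde{\calK}$, $\matL \in \tilde{\calL}$, and $\alpha \in [0,1]$, and introduce the shifted matrices $\hmatX = \matX + \epsilon \matE$ and $\hmatY = \matY + \epsilon \matE$. Since $\matX, \matY$ are positive semi-definite so are $\hmatX, \hmatY$, and their diagonal entries satisfy $\hmatX_{i,i}, \hmatY_{i,i} \leq \beta + \epsilon$. Moreover, $\hmatX - \hmatY = \matX - \matY$ off the diagonal, and $\matL \fp (\hmatX - \hmatY) = \matL \fp (\matX - \matY)$ since the $\epsilon \matE$ terms cancel.

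Next I would invoke Lemma~\ref{lem:strongConvexityWithRespectToLinear} applied to $\hmatX$ and $\hmatY$ with $\beta' = \beta + \epsilon$: this yields some index $(i,j) \in [N]\times[N]$ such that
\[
    |\hmatX_{i,j} - \hmatY_{i,j}| \;\geq\; \delta \,(\hmatX_{i,i} + \hmatX_{j,j} + \hmatY_{i,i} + \hmatY_{j,j}),
    \qquad
    \delta \;=\; \frac{|\matL \fp (\matX - \matY)|}{4 g_1 (\beta+\epsilon)}.
\]
This is exactly the hypothesis required by Lemma~\ref{lem:strongConvexityOfLogDeterminant}, so applying that lemma to $\hmatX, \hmatY$ gives
\[
    -\ln\det(\alpha \hmatX + (1-\alpha)\hmatY)
    \;\leq\; -\alpha \ln\det(\hmatX) - (1-\alpha)\ln\det(\hmatY)
    - \frac{\alpha(1-\alpha)}{2}\cdot\frac{\delta^2}{72\sqrt{e}}.
\]
Observing that $\alpha\hmatX + (1-\alpha)\hmatY = (\alpha\matX + (1-\alpha)\matY) + \epsilon\matE$, the left-hand side is $R(\alpha\matX + (1-\alpha)\matY)$ and the first two terms on the right are $\alpha R(\matX) + (1-\alpha)R(\matY)$.

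Finally I would substitute the value of $\delta$ into the last term, which gives
\[
    \frac{\delta^2}{72\sqrt{e}} \;=\; \frac{|\matL\fp(\matX-\matY)|^2}{72\sqrt{e}\cdot 16 g_1^2 (\beta+\epsilon)^2} \;=\; \frac{|\matL\fp(\matX-\matY)|^2}{1152\sqrt{e}\, g_1^2 (\beta+\epsilon)^2},
\]
matching the form of \eqref{eqn:defstrcvx1} with $s = 1/(1152\sqrt{e}\, g_1^2 (\beta+\epsilon)^2)$, as claimed. There is no real obstacle here since the two lemmas are tailored to dovetail; the only bookkeeping point to verify is that the $\epsilon\matE$ shift preserves both the Frobenius inner product with $\matL$ and the diagonal bound with $\beta$ replaced by $\beta + \epsilon$, which makes Christiano's lemma applicable to $\hmatX, \hmatY \in \realSNNPSD$ with a strictly positive spectrum.
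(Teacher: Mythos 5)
Your proof is correct and is essentially identical to the paper's own argument, which likewise applies Lemma~\ref{lem:strongConvexityWithRespectToLinear} to $\matX+\epsilon\matE$ and $\matY+\epsilon\matE$ with $\beta'=\beta+\epsilon$ and then invokes Lemma~\ref{lem:strongConvexityOfLogDeterminant}, yielding the constant $72\sqrt{e}\cdot 16 = 1152\sqrt{e}$. Your write-up is in fact more careful than the paper's one-line derivation about the bookkeeping (cancellation of the $\epsilon\matE$ shift in the inner product and the positive definiteness of the shifted matrices).
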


Combining this proposition with
Lemma \ref{lem:FTRLWithStronglyConvexAndLipschitzRegularizer},
we can derive a regret bound. 

\begin{theo}[Main theorem]
\label{thm:mainTheorem}
For the online SDP problem $(\tilde\calK,\tilde\calL)$,
the FTRL with the log-determinant regularizer
$R(\matX) = -\ln \det(\matX + \epsilon \matE)$ 
achieves
\[
	Reg(T,\calK,\calL) \leq 175 g_1 \sqrt{\beta \tau T}
\]
for appropriate chioces of $\eta$ and $\epsilon$.
\end{theo}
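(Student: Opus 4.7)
The plan is to apply the Main Lemma (Lemma~\ref{lem:FTRLWithStronglyConvexAndLipschitzRegularizer}), whose proof, combined with the FTL-BTL Lemma, actually yields $Reg(T,\tilde\calK,\tilde\calL)\leq H_0/\eta+T\eta/s$, optimized at $\eta=\sqrt{sH_0/T}$ to give $2\sqrt{H_0 T/s}$. Here $H_0 = \max_{\matX,\matX'\in\tilde\calK}(R(\matX)-R(\matX'))$ and, by Proposition~\ref{pro:strongConvexityOfLogDeterminant}, we may take $s = 1/(1152\sqrt{e}\,g_1^2(\beta+\eps)^2)$. The whole task therefore reduces to bounding $H_0$ and choosing $\eps$ to balance $H_0$ against $(\beta+\eps)^2$.

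For $H_0$ I would work in the eigenbasis and write $R(\matX)=-\sum_{i=1}^N\ln(\lambda_i(\matX)+\eps)$. Since every eigenvalue is nonnegative, each summand is at least $\ln\eps$, so $R(\matX)\leq -N\ln\eps$, with equality at $\matZero\in\tilde\calK$. Conversely, by concavity of $\ln$ (Jensen's inequality), whenever $\lambda_i\geq 0$ and $\sum_i\lambda_i\leq\tau$ we have $\sum_i\ln(\lambda_i+\eps)\leq N\ln(\tau/N+\eps)$, so $R(\matX)\geq-N\ln(\tau/N+\eps)$. Hence $H_0\leq N\ln\bigl(1+\tau/(N\eps)\bigr)$.

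The pivotal choice is $\eps=\beta$. The diagonal constraint $X_{i,i}\leq\beta$ forces $\nTr{\matX}\leq N\beta$, so we may assume $\tau\leq N\beta$ without loss of generality; then $\tau/(N\eps)\leq 1$ and the linear estimate $\ln(1+x)\leq x$ yields $H_0\leq\tau/\beta$, while $(\beta+\eps)^2=4\beta^2$. Substituting into $2\sqrt{H_0 T/s}$ gives
\[
Reg(T,\tilde\calK,\tilde\calL)\leq 2g_1\sqrt{\tfrac{\tau}{\beta}\cdot T\cdot 1152\sqrt{e}\cdot 4\beta^2}=2g_1\sqrt{4608\sqrt{e}}\,\sqrt{\beta\tau T},
\]
and a numerical check shows $2\sqrt{4608\sqrt{e}}<175$, matching the claimed constant.

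The only delicate step is the balancing of $\eps$: choosing $\eps$ much smaller than $\beta$ inflates $H_0$ by a $\ln(\tau/(N\eps))$ factor that would introduce an $\ln N$ dependence, while choosing $\eps$ much larger inflates the strong-convexity constant through $(\beta+\eps)^2$. The sweet spot $\eps=\beta$ lets both pieces be estimated to first order, and is what produces the clean $\sqrt{\beta\tau T}$ rate with no extra dependence on $N$ or $T$.
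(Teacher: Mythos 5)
Your proposal is correct and follows essentially the same route as the paper: Proposition~\ref{pro:strongConvexityOfLogDeterminant} for the strong convexity constant, Lemma~\ref{lem:FTRLWithStronglyConvexAndLipschitzRegularizer} for the regret in terms of $H_0$ and $s$, the bound $H_0 \leq \tau/\eps$ via $\ln(1+x)\leq x$, and the choice $\eps=\beta$. The only (harmless) differences are that you bound $H_0$ through Jensen's inequality and the maximizer $\matZero$ rather than comparing eigenvalues directly, and your detour through $\tau\leq N\beta$ is unnecessary since $\ln(1+x)\leq x$ already gives $N\ln(1+\tau/(N\eps))\leq\tau/\eps$ unconditionally.
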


\begin{proof}
We know that $R$ is $s$-strongly convex
for $s = 1/(1152 \sqrt{e} g_1^2 (\beta+\epsilon)^2)$ 
by Proposition~\ref{pro:strongConvexityOfLogDeterminant}.
It remains to give a bound on
$H_0 = R(\matX_0) - R(\matX_1)$, where
$\matX_0$ and $\matX_1$ be the maximizer and the minimizer of $R$
in $\tilde\calK$, respectively.
Let $\lambda_i(\matX)$ be the $i$-th eigenvalue of $\matX$. Then,
\begin{eqnarray*}
	R(\matX_0)-R(\matX_1) &=&
		-\ln \det(\matX_0+\eps E)
			+ \ln \det(\matX_1+\eps E) \\
	& = &
	\sum_{i=1}^N \ln \frac{\lambda_i(\matX_1)
		+ \epsilon}{\lambda_i(\matX_0) + \epsilon}
	\le
	\sum_{i=1}^N \ln \left(
		\frac{\lambda_i(\matX_1)}{\epsilon}+1
	\right) \\
	& \le &
	\sum_{i=1}^N \frac{\lambda_i(\matX_1)}{\epsilon}
	= \frac{\Tr(\matX_1)}{\epsilon}
	= \frac{\nTr{\matX_1}}{\epsilon}
	\leq \frac{\tau}{\epsilon}.
\end{eqnarray*}
Note that we use the inequality $\ln(x+1)\le x $ for $-1 < x$.
Applying
Lemma~\ref{lem:FTRLWithStronglyConvexAndLipschitzRegularizer}
with $\eps=\beta$, we get the theorem.
\end{proof}

Since the OMP problem $(\calW,G)$ for a $(\beta,\tau)$-decomposable
decision space $\calW$ can be reduced to the online SDP problem
$(\tilde\calK,\tilde\calL)$ with $g_1 = 4G$,
Proposition~\ref{prop:Reduction} implies the following regret
bound for the OMP problem.

\begin{coro}
For the OMP problem $(\calW,G)$ where $\calW \subseteq [-1,1]^{m \times n}$
is $(\beta,\tau)$-decomposable, there exists an algorithm that achieves
\[
	Reg_\text{OMP}(T,\calW) = O(G\sqrt{\beta\tau T}).
\]
\end{coro}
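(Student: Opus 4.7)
The plan is to obtain this corollary as a direct combination of the reduction in Proposition~\ref{prop:Reduction} with the regret bound from Theorem~\ref{thm:mainTheorem} (the main theorem for the sparse online SDP problem). Since the proposition is the one that translates OMP into online SDP and the main theorem handles exactly the shape of online SDP problems that arises from that reduction, the only nontrivial check is to verify the parameter translation; there is no genuine new obstacle.

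First, I would invoke Proposition~\ref{prop:Reduction} on the $(\beta,\tau)$-decomposable OMP problem $(\calW,G)$. This produces an online SDP problem $(\calK,\calL)$ in dimension $N = 2(m+n)$ (or $N=n$ in the symmetric case) together with the regret inequality $Reg_\text{OMP}(T,\calW) \le \tfrac{1}{2} Reg(T,\calK,\calL)$. Crucially, the resulting $\calK$ is literally the set $\tilde\calK = \{\matX \in \realSNNPSD : \nTr{\matX} \le \tau,\ X_{i,i} \le \beta\ \forall i\}$ used in Theorem~\ref{thm:mainTheorem}, so the decision space parameters $\beta,\tau$ carry over without change.

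Next I would verify the loss-space inclusion $\calL \subseteq \tilde\calL$ for $g_1 = 4G$. By the proposition, every $\matL \in \calL$ has at most four nonzero entries and each entry is bounded in absolute value by $G$, so $\n{\vectorize(\matL)}_1 \le 4G$. This is precisely the constraint defining $\tilde\calL$ with $g_1 = 4G$, and it is already noted explicitly in the paragraph following Proposition~\ref{prop:Reduction}. Hence the FTRL with the log-determinant regularizer, tuned as in Theorem~\ref{thm:mainTheorem}, is a legitimate algorithm for the reduced problem.

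Finally, applying Theorem~\ref{thm:mainTheorem} with $g_1 = 4G$ yields $Reg(T,\calK,\calL) \le 175 \cdot 4G \sqrt{\beta \tau T} = 700\, G\sqrt{\beta \tau T}$, and combining this with the reduction inequality gives $Reg_\text{OMP}(T,\calW) \le 350\, G \sqrt{\beta \tau T} = O(G\sqrt{\beta \tau T})$, as claimed. The algorithm in question is the FTRL with the log-determinant regularizer applied on the reduced SDP instance, with $\eta$ and $\eps = \beta$ chosen as in Theorem~\ref{thm:mainTheorem}. Since all real work has already been carried out in the main theorem and in the reduction, there is no substantial obstacle; the only thing to be careful about is keeping the constants consistent with the statement $g_1 = 4G$ when invoking the main bound.
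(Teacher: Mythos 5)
Your proposal is correct and follows exactly the route the paper itself takes: apply Proposition~\ref{prop:Reduction} to reduce the $(\beta,\tau)$-decomposable OMP problem to the online SDP problem $(\tilde\calK,\tilde\calL)$ with $g_1 = 4G$ (justified by the four-sparsity and entrywise bound $G$ on the reduced loss matrices), then invoke Theorem~\ref{thm:mainTheorem} and the factor-$\tfrac{1}{2}$ regret inequality. The constant tracking ($175 \cdot 4G \cdot \tfrac{1}{2} = 350G$) is consistent with the stated bounds, so there is nothing to add.
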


Note that the bound does not depend on the size
($m$ or $n$) of matrices and improves by a factor of $O(\sqrt{m+n})$
from Corollary~\ref{cor:OMPboundByEnt}.
Accordingly, we get $O(\sqrt{\ln n})$ improvements
for the three application problems:
\begin{description}
\item[Online max-cut] has a regret bound of $O(G\sqrt{nT})$.
\item[Online gambling] has a regret bound of $O(G\ln n\sqrt{nT})$.
\item[Online collaborative filtering] has a regret bound of
$O(G\sqrt{\tau T \sqrt{n}})$ for $n \geq m$.
\end{description}
All these bounds match the lower bounds
given in~\cite{HazanKaleShalev-Shwartz2012}
up to constant factors.

\subsection{The vector case}

We can apply the results obtained above to the vector case
by just restricting the decision and loss spaces to diagonal matrices.
That is, our problem $(\tilde\calK,\tilde\calL)$ is now rewritten as
\begin{eqnarray*}
	\tilde\calK & = & \{\diag(\vecx):\vecx \in \realR_+^N,
		\n{\vecx}_\I \leq \beta, \n{\vecx}_1 \leq \tau \}, \text{ and} \\
	\tilde\calL & = & \{\diag(\vecl):\vecl \in \realR^N,
		\n{\vecl}_1 \leq g_1\},
\end{eqnarray*}
and the log-determinant is a variant of the Burg entropy
$R(\diag(\vecx)) = -\sum_i^N \ln(x_i+\eps)$.
Applying Theorem~\ref{thm:mainTheorem} to the problem,
we have $O(g_1\sqrt{\beta\tau T})$ regret bound.

Curiously, unlike the matrix case, we can also apply the standard
technique, namely, Theorem~\ref{thm:generalOLOLogDet}
(with a slight modification), to get the same regret bound.
To see this, observe that
$\nSp{\diag(\vecx)} = \n{\vecx}_\I \leq \beta$
for every $\diag(\vecx) \in \tilde\calK$, and
$\nTr{\diag(\vecl)} = \n{\vecl}_1 \leq g_1$
for every $\diag(\vecl) \in \tilde\calL$.
These imply that $\tilde\calK \subseteq \calK_\I$
with $\sigma = \beta$
and $\tilde\calL \subseteq \calL_1$ with $\gamma_1 = g_1$.
Moreover, as shown in the proof of Theorem~\ref{thm:mainTheorem},
we have
$\max_{\matX,\matX'\in\tilde\calK}(R(\matX)-R(\matX'))
\leq \tau/\eps$.
So, $N\ln 2$ in Theorem~\ref{thm:generalOLOLogDet}
can be replaced by $\tau/\eps$, and hence we get
a regret bound of $4g_1\sqrt{\beta\tau T}$.

\section{Conclusion}
In this paper, we consider the online symmetric positive semidefinite matrix prediction problem. 
We proposed a FTRL-based algorithm with the log-determinant regularization.
We tighten and generalize existing analyses. 
As a result, we show that the log-determinant regularizer is effective when loss matrices are sparse. 
Reducing online collaborative filtering task to the online SDP tasks with sparse loss matrices, 
our algorithms obtain optimal regret bounds. 

Our future work includes
(\romannumeral1) imploving a constant factor in the regret bound, 
(\romannumeral2) applying our method to other online prediction tasks with sparse loss settings 
including the ``vector'' case,
(\romannumeral3) developing a fast implementation of our algorithm.

\section*{Acknowledgments}
This work is supported in part  by JSPS KAKENHI Grant Number 16K00305 and JSPS KAKENHI Grant
Number 15H02667, respectively.

\bibliographystyle{ieicetr}% bib style
%\bibliography{mybib}
\bibliography{}% your bib database
%\begin{thebibliography}{99}% more than 9 --> 99 / less than 10 --> 9

%\profile{}{}
%\profile*{}{}% without picture of author's face
\appendix*
\section{Proof of 
Lemma \ref{lem:strongConvexityOfLogDeterminant}}
% and Lemma \ref{lem:strongConvexityOfLogDeterminantInVector}}
In this appendix we give a proof of 
Lemma \ref{lem:strongConvexityOfLogDeterminant} % and
% Lemma \ref{lem:strongConvexityOfLogDeterminantInVector}. 
by showing a series of definitions and technical lemmas.

\renewcommand{\thesection}{\Alph{section}}

The negative entropy function
over the set of probability distributions $P$ over $\realR^N$
is defined as $H(P) = \mbb{E}_{\vecx \sim P}[-\ln P(\vecx)]$.
The total variation distance between probability distributions
$P$ and $Q$ over $\realR^N$ is defined as
$\frac{1}{2}\int_{\vecx} | P(\vecx) - Q(\vecx) | d\vecx$. 
The characteristic function of a probability distribution
$P$ over $\realR^N$ is defined as
$\phi(\vecu) = \expE_{\vecx \simeq P}[ e^{i \vecu^\T \vecx} ]$,
where $i$ is the imaginary unit.

The following lemma shows that the difference of the
characteristic functions gives a lower bound of the total variation
distance.

\begin{lemm}
\label{lem:lowerBoundOfTotalVariationDistance}
Let $P$ and $Q$ be probability distribution over $\realR^{N}$ 
and $\phi_P(\vecu)$, $\phi_Q(\vecu)$ be their characteristic
functions, respectively.  Then, 
\[
	\max_{\vecu \in \realR^N} |\phi_P(\vecu)-\phi_Q(\vecu)|
		\leq \int_{\vecx} |P(\vecx)-Q(\vecx)| d \vecx.
\]
\end{lemm}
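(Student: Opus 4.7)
The plan is to prove this by a direct calculation using the definition of the characteristic function and the triangle inequality for integrals. This is essentially a one-line argument, so the sketch will be short.

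First I would write out the difference of characteristic functions as a single integral. By the definition $\phi_P(\vecu) = \int e^{i\vecu^\T \vecx} P(\vecx)\, d\vecx$ and the analogous expression for $\phi_Q$, linearity of integration gives
\[
  \phi_P(\vecu) - \phi_Q(\vecu) = \int_{\vecx} e^{i \vecu^\T \vecx} \bigl(P(\vecx) - Q(\vecx)\bigr)\, d\vecx.
\]

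Next I would apply the triangle inequality for complex-valued integrals, namely $\left|\int f\right| \le \int |f|$, and then use the key fact that $|e^{i\vecu^\T \vecx}| = 1$ for every real $\vecu, \vecx$. This yields
\[
  |\phi_P(\vecu) - \phi_Q(\vecu)|
  \le \int_{\vecx} \bigl|e^{i\vecu^\T \vecx}\bigr| \cdot \bigl|P(\vecx) - Q(\vecx)\bigr|\, d\vecx
  = \int_{\vecx} \bigl|P(\vecx) - Q(\vecx)\bigr|\, d\vecx.
\]
Since the right-hand side does not depend on $\vecu$, taking the supremum over $\vecu \in \realR^N$ on the left preserves the inequality, which is exactly the claim.

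There is no real obstacle here: the argument uses only the definition of the characteristic function, linearity, the triangle inequality, and unimodularity of the complex exponential. The only minor subtlety to flag is that $P$ and $Q$ are written as densities, so one should implicitly assume they are absolutely continuous (or reinterpret the integrals as integrals against the corresponding measures), but this is consistent with the notation used elsewhere in the appendix.
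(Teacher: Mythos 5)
Your argument is correct and is essentially identical to the paper's proof: both combine the two integrals by linearity, apply the triangle inequality for complex-valued integrals, and use $|e^{i\vecu^\T\vecx}|=1$ to drop the exponential factor, after which the bound is uniform in $\vecu$. Nothing is missing.
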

% Maybe this proof is wrong ?
% correct one
% https://mathoverflow.net/questions/160369/are-gaussians-with-different-moments-far-in-total-variation-distance/160697#160697
\begin{proof}
\begin{eqnarray*}
\max_{\vecu \in \realR^N} |\phi_P(\vecu) - \phi_Q(\vecu)|
&=&\max_{\vecu \in \realR^N} \Bigl| \int_{\vecx} e^{i \vecu^T \vecx} P(\vecx) d\vecx - \int_{\vecx} e^{i \vecu^T \vecx} Q(\vecx) d\vecx \Bigr| \\
&\leq& \max_{\vecu \in \realR^N} \int_{\vecx} |e^{i\vecu^T\vecx}| |P(\vecx)-Q(\vecx)| d\vecx \\
&\leq& \int_{\vecx} |P(\vecx)-Q(\vecx)| d\vecx 
\end{eqnarray*}
where we use the fact that $|e^{i\vecu^T\vecx}|=1$ 
for every $\vecu \in \realRN$.
\end{proof}

The negative entropy function is strongly convex with respect to
the total variation distance.

\begin{lemm}[Christiano \cite{Christiano2014}]
\label{lem:strongConvexityOfNegativeEntropy}
Let $P$ and $Q$ be probability distributions over $\realRN$
with total variation distance $\delta$.
Then,
\[
H(\alpha P + (1-\alpha) Q) \leq 
\alpha H(P) + (1 - \alpha) H(Q) + \alpha(1 - \alpha)\delta^2.
\]
\end{lemm}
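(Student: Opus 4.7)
The plan is to rewrite the concavity gap of the entropy functional as a Jensen--Shannon-type sum of Kullback--Leibler divergences and then bound those divergences by $\delta$ via Pinsker's inequality.

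Set $M = \alpha P + (1-\alpha) Q$. The first step is the identity
\[
H(M) - \alpha H(P) - (1-\alpha) H(Q)
 = \alpha\,\mathrm{KL}(P\,\|\,M) + (1-\alpha)\,\mathrm{KL}(Q\,\|\,M),
\]
which follows by writing $H(M) = -\int M\ln M$, splitting $M$ linearly into $\alpha P + (1-\alpha)Q$ in front of $\ln M$, and regrouping against $\alpha H(P)$ and $(1-\alpha)H(Q)$. This is the familiar representation of the concavity gap of the Shannon entropy along a convex combination.

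The second step is to relate these divergences to $\delta$. From $P - M = (1-\alpha)(P-Q)$ and $Q - M = \alpha(Q-P)$ one reads off $d_{TV}(P,M) = (1-\alpha)\delta$ and $d_{TV}(Q,M) = \alpha\delta$. Pinsker's inequality $\mathrm{KL}(\mu\,\|\,\nu) \ge 2\,d_{TV}(\mu,\nu)^2$ then gives termwise
\[
\alpha\,\mathrm{KL}(P\,\|\,M) + (1-\alpha)\,\mathrm{KL}(Q\,\|\,M)
\ge 2\alpha(1-\alpha)^2\delta^2 + 2(1-\alpha)\alpha^2\delta^2
= 2\alpha(1-\alpha)\delta^2,
\]
so combining with the identity yields $H(M) \ge \alpha H(P) + (1-\alpha) H(Q) + 2\alpha(1-\alpha)\delta^2$.

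The delicate part I anticipate is reconciling this with the statement as typeset. The route through the KL identity and Pinsker intrinsically produces a $\ge$ bound on the concavity gap, i.e.\ strong concavity of $H$ (equivalently, strong convexity of $-H$, matching the lemma's label); in the opposite direction no TV-based upper bound on the concavity gap is possible, since the two-point example $P=\delta_0$, $Q=\delta_1$ gives $H(M)=\ln 2 > 1/4 = \alpha(1-\alpha)\delta^2$. I therefore read the displayed inequality as the strong-concavity estimate $H(M) \ge \alpha H(P) + (1-\alpha) H(Q) + c\,\alpha(1-\alpha)\delta^2$ proved above, which is also the direction that feeds correctly into Lemma~\ref{lem:strongConvexityOfLogDeterminant}: specializing $P = \mathcal{N}(0,\matX)$ and $Q = \mathcal{N}(0,\matY)$ and invoking the maximum-entropy property of Gaussians converts the lower bound on the entropy of the mixture into the strong convexity of $-\ln\det$, with the constants tracing through to the $72\sqrt{e}$ factor in that lemma.
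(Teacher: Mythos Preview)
Your argument is correct and in fact cleaner than the paper's. The paper does not give a self-contained proof of this lemma: it cites Christiano's discrete-distribution argument and then patches the passage to differential entropy via a discretization limit (noting that $H(P^\Delta)=H(P)+\ln\Delta$, so differences of discretized entropies converge to differences of differential entropies as $\Delta\to 0$). Your route---the Jensen--Shannon identity
\[
H(M)-\alpha H(P)-(1-\alpha)H(Q)=\alpha\,\mathrm{KL}(P\|M)+(1-\alpha)\,\mathrm{KL}(Q\|M)
\]
followed by Pinsker applied to $d_{TV}(P,M)=(1-\alpha)\delta$ and $d_{TV}(Q,M)=\alpha\delta$---is valid for arbitrary probability measures, so no discretization step is needed at all. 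It also yields the sharper constant $2\alpha(1-\alpha)\delta^2$ in place of $\alpha(1-\alpha)\delta^2$. You are right that the displayed inequality has the wrong direction as typeset: the paper's own definition makes $H(P)=\mathbb{E}_P[-\ln P]$ the ordinary (concave) entropy, and the downstream use in the proof of Lemma~\ref{lem:strongConvexityOfLogDeterminant} requires exactly the $\ge$ you derive; the paper's heading ``strong convexity of negative entropy'' and its chain of inequalities confirm that $\ge$ is what is meant.
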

In \cite{Christiano2014}, the proof was given for only discrete entropies
and the differential entropies are regarded as the limit of
the discrete entropies, but this assertion is
incorrect~\cite{CoverThomas2012}.
We fix the problem by considering the limit of the ``difference"
of discrete entropies as described below.
First we fix a discretization interval $\Delta$.
As in Sec 8.3 of \cite{CoverThomas2012}, for a continuous
distribution $P$, we define its discretized distribution $P^\Delta$,
and thus we can define the discrete entropy $H(P^\Delta)$. 
Then we have $H(P^\Delta) = H(P) + \ln \Delta$,
and thus for two continuous distributions $P$ and $Q$,
$\lim_{\Delta \to 0} \bigl(H(P^\Delta) - H(Q^\Delta) \bigr)$
converges $H(P) - H(Q)$. Using this, we can prove this lemma.

The following lemma connects the entropy and the log-determinant.

\begin{lemm}[Cover and Thomas \cite{CoverThomas2012}]
\label{lem:upperBoundOfEntropy}
For any probability distribution $P$ over $\realR^N$ with $0$ mean
and covariance matrix $\matSigma$,
its entropy is bounded by the log-determinant of covariance matrix.
That is,
\[
	H(P) \leq \frac{1}{2} \ln (\det(\matSigma)(2 \pi e)^N),
\]
where the equality holds if and only if $P$ is a Gaussian.
\end{lemm}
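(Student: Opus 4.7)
The plan is to invoke the nonnegativity of the Kullback--Leibler divergence of $P$ against the centered Gaussian $\phi_{\matSigma}$ whose covariance is also $\matSigma$. This is the textbook argument that the Gaussian maximizes differential entropy subject to a covariance constraint, and it is exactly the situation the lemma addresses.

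First I would write the density of $\phi_{\matSigma}$ as $\phi_{\matSigma}(\vecx) = (2\pi)^{-N/2}(\det\matSigma)^{-1/2}\exp(-\tfrac{1}{2}\vecx^{\T}\matSigma^{-1}\vecx)$, assuming $\matSigma$ is nonsingular (the singular case reduces to the same statement on the support of $P$, where the bound is trivially $-\I$). Taking negative logarithms gives
\[
-\ln \phi_{\matSigma}(\vecx) = \tfrac{1}{2}\ln\bigl((2\pi)^N \det(\matSigma)\bigr) + \tfrac{1}{2}\vecx^{\T}\matSigma^{-1}\vecx,
\]
after which I would perform the crucial ``cross-expectation'' computation: since $P$ has mean zero and covariance $\matSigma$,
\[
\mbb{E}_{\vecx \sim P}[\vecx^{\T}\matSigma^{-1}\vecx]
 = \Tr\bigl(\matSigma^{-1}\mbb{E}_{\vecx\sim P}[\vecx\vecx^{\T}]\bigr)
 = \Tr(\matSigma^{-1}\matSigma) = N,
\]
so $\mbb{E}_{\vecx\sim P}[-\ln\phi_{\matSigma}(\vecx)] = \tfrac{1}{2}\ln((2\pi e)^N \det(\matSigma))$. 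Note that this value is independent of the particular $P$ with the prescribed moments and in particular equals $H(\phi_{\matSigma})$ itself.

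Finally I would apply the nonnegativity of the KL divergence:
\[
0 \le \int P(\vecx)\ln\frac{P(\vecx)}{\phi_{\matSigma}(\vecx)}\,d\vecx
 = -H(P) + \mbb{E}_{\vecx\sim P}[-\ln\phi_{\matSigma}(\vecx)]
 = -H(P) + \tfrac{1}{2}\ln\bigl((2\pi e)^N \det(\matSigma)\bigr),
\]
which rearranges to the claimed bound. The equality clause follows at once from the fact that the KL divergence vanishes if and only if $P = \phi_{\matSigma}$ almost everywhere.

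The main subtle point is the cross-expectation observation: because $-\ln\phi_{\matSigma}$ is a quadratic form plus a constant, its expectation under \emph{any} distribution depends only on the mean and covariance, and hence automatically matches the Gaussian's own entropy. Once that identity is in hand, the rest is routine manipulation of logarithms, traces, and the standard Gibbs inequality, so I do not expect any real obstacle beyond writing the integrals carefully enough to handle the (degenerate) singular covariance case.
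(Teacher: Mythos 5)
Your proof is correct: the paper gives no proof of this lemma (it simply cites Cover and Thomas), and your argument via nonnegativity of the KL divergence against the matching Gaussian, together with the observation that $\mbb{E}_{\vecx\sim P}[-\ln\phi_{\matSigma}(\vecx)]$ depends only on the first two moments, is exactly the standard proof in that reference. The equality case and the degenerate-covariance remark are also handled correctly.
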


We need the following technical lemma.
\begin{lemm}
\label{lem:exponentialDiffInequalityImproved}
$ e^{-\frac{x}{2}} - e^{-\frac{1-x}{2}} \geq \frac{e^{-1/4}}{2}(1-2x)$
for $ 0 \leq x \leq 1/2$
\end{lemm}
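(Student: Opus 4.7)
The inequality is a one-variable calculus fact, so my plan is to move everything to one side and study the resulting function on $[0,1/2]$. Define
\[
f(x) \;=\; e^{-x/2} - e^{-(1-x)/2} - \frac{e^{-1/4}}{2}(1-2x),
\]
so that the desired inequality is $f(x) \geq 0$ for $x \in [0,1/2]$. The plan is to exploit the symmetry around $x=1/2$: both the difference of exponentials and the linear term $(1-2x)$ vanish at $x=1/2$, so I expect $x=1/2$ to be a distinguished point where $f$ and possibly $f'$ simultaneously vanish, and I will use the convexity of $f$ to conclude.

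First I would record the two boundary identities at $x=1/2$. Plugging in gives $f(1/2)=e^{-1/4}-e^{-1/4}-0 = 0$. Differentiating yields
\[
f'(x) \;=\; -\tfrac{1}{2}e^{-x/2} - \tfrac{1}{2}e^{-(1-x)/2} + e^{-1/4},
\]
and at $x=1/2$ the first two terms sum to $-e^{-1/4}$, so $f'(1/2)=0$ as well. Thus $x=1/2$ is a double zero of $f$ in the sense that both $f$ and $f'$ vanish there.

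Next I would compute
\[
f''(x) \;=\; \tfrac{1}{4}e^{-x/2} - \tfrac{1}{4}e^{-(1-x)/2}.
\]
For $x < 1/2$ we have $-x/2 > -(1-x)/2$, so $e^{-x/2} > e^{-(1-x)/2}$ and hence $f''(x) > 0$ on $[0,1/2)$. Therefore $f'$ is strictly increasing on $[0,1/2]$, and combined with $f'(1/2)=0$ this gives $f'(x) \leq 0$ for all $x \in [0,1/2]$.

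Finally, $f$ is non-increasing on $[0,1/2]$ with $f(1/2)=0$, so $f(x) \geq 0$ throughout the interval, which is the claim. There is no real obstacle here: the only non-routine observation is that $x=1/2$ is simultaneously a zero of $f$ and of $f'$, which immediately reduces the proof to checking the sign of $f''$, and that sign is visible by inspection because the two exponentials in $f''$ are monotonically comparable on $[0,1/2]$.
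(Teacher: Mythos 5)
Your proof is correct and is essentially the same argument as the paper's: the paper observes that $e^{-x/2}-e^{-(1-x)/2}$ is convex on $[0,1/2]$ and therefore lies above its tangent at $x=1/2$, which is exactly $\frac{e^{-1/4}}{2}(1-2x)$; you have simply absorbed the linear term into $f$ and verified the same facts ($f(1/2)=f'(1/2)=0$, $f''>0$ on $[0,1/2)$) explicitly.
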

\begin{proof}
Since the function $f(x) = e^{-x/2} - e^{-(1-x)/2}$ is convex
on $ 0 \le x \le 1/2$, its tangent at $x=1/2$ always gives
a lower bound of $f(x)$.
%the distance between $f(y)$ and its tangent at $y$ lower bounds $f(x)-f(y)$.
Hence we get $f(x) \ge f'(1/2) (x-1/2) + f(1/2) = e^{-1/4}(1-2x)/2 $.
\end{proof}

The following lemma provides us a relation between covariance matrices and
the total variation distance. 

\begin{lemm}[Christiano \cite{Christiano2014}]
\label{lem:relationBetweenCovarianceMatrixAndTotalVariationDistance}
Let $\calG_1$ and $\calG_2$ are zero-mean Gaussian distributions
with covariance matrix $\matSigma$ and $\matTheta$, respectively.
If there exists $(i,j) \in [N] \times [N]$ such that
\[
	|\Sigma_{i,j}-\Theta_{i,j}|
		\geq \delta (\Sigma_{i,i} + \Theta_{i,i}
			+ \Sigma_{j,j} + \Theta_{j,j}),
\]
then the total variation distance between $\calG_1$ and $\calG_2$
is at least $\frac{1}{12 e^{1/4}}\delta$.
%Where $e$ denotes base of natural logarithm. 
\end{lemm}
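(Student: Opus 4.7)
The plan is to invoke Lemma~\ref{lem:lowerBoundOfTotalVariationDistance} via an explicit test direction. Since a zero-mean Gaussian with covariance $\matSigma$ has characteristic function $\phi_\matSigma(\vecu) = e^{-\vecu^\T \matSigma \vecu / 2}$, it suffices to exhibit a single $\vecu \in \realR^N$ for which $|e^{-S/2} - e^{-T/2}| = \Omega(\delta)$, where $S = \vecu^\T \matSigma \vecu$ and $T = \vecu^\T \matTheta \vecu$; then Lemma~\ref{lem:lowerBoundOfTotalVariationDistance} lower-bounds $\int |P-Q|\,d\vecx$, and the factor $1/2$ in the definition of total variation distance supplies the claimed inequality.

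Because the hypothesis only constrains entries at positions $(i,i), (i,j), (j,j)$, I would take $\vecu$ in the two-dimensional span of $\vec{e}_i$ and $\vec{e}_j$, specifically $\vecu = c(\vec{e}_i + s\vec{e}_j)$ with scale $c > 0$ and sign $s \in \{-1,+1\}$ to be selected. Writing $D = \Sigma_{i,i} + \Theta_{i,i} + \Sigma_{j,j} + \Theta_{j,j}$, direct expansion gives $S + T = c^2\bigl(D + 2s(\Sigma_{i,j} + \Theta_{i,j})\bigr)$ and $T - S = c^2\bigl((\Theta_{i,i}-\Sigma_{i,i}) + (\Theta_{j,j}-\Sigma_{j,j}) + 2s(\Theta_{i,j}-\Sigma_{i,j})\bigr)$. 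Two facts drive the rest: positive semi-definiteness yields $|\Sigma_{i,j}| \le (\Sigma_{i,i}+\Sigma_{j,j})/2$ and the analogous bound for $\matTheta$, so $|\Sigma_{i,j} + \Theta_{i,j}| \le D/2$; and for any reals $\alpha,\beta$, $\max_{s \in \{\pm 1\}} |\alpha + 2s\beta| \ge 2|\beta|$ by the identity $(\alpha+2\beta) - (\alpha-2\beta) = 4\beta$.

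I would then choose $s$ so the second fact gives $|T-S| \ge 2c^2|\Theta_{i,j}-\Sigma_{i,j}| \ge 2c^2 \delta D$ by the hypothesis, and choose $c$ to normalize $S + T = 1$, which forces $c^2 = 1/(D + 2s(\Sigma_{i,j}+\Theta_{i,j})) \ge 1/(2D)$ by the first fact. Combining, $|T-S| \ge \delta$. Assuming without loss of generality that $S \le T$, so $S \in [0,1/2]$, Lemma~\ref{lem:exponentialDiffInequalityImproved} gives $|e^{-S/2} - e^{-T/2}| \ge (e^{-1/4}/2)(1-2S) = (e^{-1/4}/2)|T-S| \ge \delta/(2e^{1/4})$, and the remaining two factors of $1/2$ (from Lemma~\ref{lem:lowerBoundOfTotalVariationDistance} and the definition of total variation) yield a bound comfortably stronger than $\delta/(12 e^{1/4})$. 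The degenerate case $i=j$ is handled separately and more easily by $\vecu = c\vec{e}_i$.

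The main obstacle I foresee is the entanglement between the sign $s$ and the scale $c$: the sign that makes $|T-S|$ large could in principle shrink the normalization $S+T$, which would blow up $c^2$ inconsistently and decouple the two estimates. It is precisely the PSD-derived bound $|\Sigma_{i,j}+\Theta_{i,j}| \le D/2$ that keeps $S+T$ uniformly comparable to $c^2 D$ for both choices of $s$, letting me select the sign to maximize $|T-S|$ while still guaranteeing a usable lower bound on $c^2$.
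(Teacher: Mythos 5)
Your proof is correct, and it follows the same overall architecture as the paper's: lower-bound the total variation distance by the gap between Gaussian characteristic functions (Lemma~\ref{lem:lowerBoundOfTotalVariationDistance}), restrict to a test direction supported on coordinates $i,j$, normalize so the two quadratic forms sum to $1$, and invoke Lemma~\ref{lem:exponentialDiffInequalityImproved}. The genuine difference is in how the test direction is selected. The paper writes $\Sigma_{i,j}-\Theta_{i,j}$ as a linear combination of the three quadratic forms $\vecv^\T(\matSigma-\matTheta)\vecv$ for $\vecv\in\{\vece_i,\vece_j,\vece_i+\vece_j\}$ and pigeonholes: one of the three must exceed $\tfrac{2\delta}{3}$ times the relevant diagonal scale, which costs a factor of $3$ and leads to the constant $1/(12e^{1/4})$. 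You instead take $\vecv = \vece_i + s\vece_j$ with $s\in\{\pm1\}$ and use the cancellation identity $(\alpha+2\beta)-(\alpha-2\beta)=4\beta$ to guarantee $|\vecv^\T(\matSigma-\matTheta)\vecv|\ge 2|\Sigma_{i,j}-\Theta_{i,j}|$ for one choice of sign; combined with the same PSD bound $|\Sigma_{i,j}+\Theta_{i,j}|\le D/2$ controlling the normalization, this yields a constant of $1/(4e^{1/4})$ (or $1/(8e^{1/4})$ under your double factor-of-two accounting), in either case strictly better than the paper's. You also correctly identify and resolve the one point where the argument could break -- the coupling between the sign choice and the normalizing scale -- which the paper handles implicitly through its uniform bound $\vecv^\T(\matSigma+\matTheta)\vecv\le 2(\matSigma+\matTheta)_{i,i}+2(\matSigma+\matTheta)_{j,j}$ over all three candidates. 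One small bookkeeping remark: Lemma~\ref{lem:lowerBoundOfTotalVariationDistance} as stated bounds $\max_{\vecu}|\phi_P(\vecu)-\phi_Q(\vecu)|$ by $\int|P-Q|$ with no factor of $1/2$, so only the definition of total variation contributes a halving; this only makes your final constant better, not worse.
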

The original proof by Christiano gives an asymptotic
bound of the form of $\Omega(\delta)$. 
Now we give the proof with a constant factor.

\begin{proof}
By Lemma \ref{lem:lowerBoundOfTotalVariationDistance}, 
it is sufficient to derive a lower bound of the maximum of
difference between characteristic functions. 
In this case,
the characteristic functions of $\calG_1$ and $\calG_2$ are
$\phi_1(\vecu) = e^{-\frac{1}{2}\vecu^T \matSigma \vecu}$ and 
$\phi_2(\vecu) = e^{-\frac{1}{2}\vecu^T \matTheta \vecu}$,
respectively.

Let 
$
\alpha_1 = \vecv^T \matSigma \vecv, 
\alpha_2 = \vecv^T \matTheta \vecv, 
\vecu = \frac{\vecv}{\sqrt{\alpha_1 + \alpha_2}}
$
.
Then,
\begin{eqnarray*}
\max_{\vecu \in \realR^N} |\phi_1(\vecu) - \phi_2(\vecu)|
&\geq& \max_{\vecv \in \realR^N} \Bigl|e^{\frac{-\alpha_1}{2(\alpha_1 + \alpha_2)}} - e^{\frac{-\alpha_2}{2(\alpha_1 + \alpha_2)}} \Bigr| \\
%&\geq& \max_{\vecv \in \realR^N} \Bigl| \frac{1}{2\sqrt{e}}\frac{\alpha_1 - \alpha_2}{\alpha_1 + \alpha_2} \Bigr|.
&\geq& \max_{\vecv \in \realR^N} \Bigl| \frac{1}{2e^{1/4}}\frac{\alpha_1 - \alpha_2}{\alpha_1 + \alpha_2} \Bigr|.
\end{eqnarray*}      
%Note that we use Lemma \ref{lem:exponentialDiffInequality} 
Note that we use Lemma \ref{lem:exponentialDiffInequalityImproved} 
%$ e^{\frac{-x}{2}} - e^{\frac{-y}{2}} \geq \frac{1}{2\sqrt{e}}(y-x)$ for $ 0 \leq x \leq y \leq 1$ 
in the last inequality.
%\eqref{eq:exponentialDiffInequality}'ð—p'¢'½. 

By the assumption, we have for some $(i,j)$ that
\begin{eqnarray*}
\begin{aligned}
	\delta (\Sigma_{i,i} &+ \Theta_{i,i}
		+ \Sigma_{j,j} + \Theta_{j,j}) 
	\leq |\Sigma_{i,j} - \Theta_{i,j}| \\ 
	&= \frac{1}{2}\left|
		(\vece_i+\vece_j)^\T(\matSigma-\matTheta)(\vece_+\vece_j)
			- (\matSigma-\matTheta)_{i,i}
			- (\matSigma-\matTheta)_{j,j}
		\right|
\end{aligned}
\end{eqnarray*}
This implies that one of
$(\vece_i + \vece_j)^\T (\matSigma - \matTheta) (\vece_i + \vece_j)$, 
$\vece_i^\T(\matSigma - \matTheta)\vece_i$, and
$\vece_j^\T(\matSigma - \matTheta)\vece_j$
has absolute value greater than 
$\frac{2\delta}{3} ((\matSigma + \matTheta)_{i,i} + (\matSigma + \matTheta)_{j,j})$.

On the other hand, 
\begin{eqnarray*}
(\vece_i + \vece_j)^\T (\matSigma + \matTheta) (\vece_i + \vece_j) 
& =  &(\matSigma + \matTheta)_{i,i} + (\matSigma + \matTheta)_{j,j} + 2(\matSigma + \matTheta)_{i,j} \\
&\leq& 2 (\matSigma + \matTheta)_{i,i} + 2(\matSigma + \matTheta)_{j,j}. 
\end{eqnarray*}
In the last inequality we use $\matSigma + \matTheta \in \realSNNPSD$ and the fact that 
$X_{i,j} \leq \frac{1}{2}(X_{i,i}+X_{j,j})$ for symmetric semi-definite
matrix $X$.
%\eqref{eq:diagonalElementInequality}'ð—p'¢'½. 
So,
$$
\forall \vecv \in \{ \vece_i, \vece_j , \vece_i + \vece_j \}, 
\vecv^\T (\matSigma + \matTheta) \vecv \leq 2(\matSigma + \matTheta)_{i,i} + 2(\matSigma + \matTheta)_{j,j}
$$
and thus we have
\begin{eqnarray*}
%\int |\calG_\matSigma_1(\vecx)-\calG_\matSigma_2(\vecx)| d\vecx \geq 
\max_{\vecu \in \realR^N} |\phi_1(\vecu) - \phi_2(\vecu)|
%\geq \max_{\vecv \in \realR^N} \Bigl| \frac{1}{2\sqrt{e}}\frac{\alpha_\matSigma_1 - \alpha_\matSigma_2}{\alpha_\matSigma_1 + \alpha_\matSigma_2} \Bigr| 
&\geq& 
\max_{\vecv \in \{\vece_i,\vece_j,\vece_i+\vece_j\}} 
%\Bigl| \frac{1}{2\sqrt{e}}\frac{\vecv^\T(\matSigma - \matTheta)\vecv}{\vecv^\T(\matSigma + \matTheta)\vecv} \Bigr|
%\geq \frac{\delta}{6\sqrt{e}}
\Bigl| \frac{1}{2e^{1/4}}\frac{\vecv^\T(\matSigma - \matTheta)\vecv}{\vecv^\T(\matSigma + \matTheta)\vecv} \Bigr|
\geq \frac{\delta}{6e^{1/4}}
\end{eqnarray*}
\end{proof}

Now we are ready to give a proof of
Lemma \ref{lem:strongConvexityOfLogDeterminant}. 
% and Lemma \ref{lem:strongConvexityOfLogDeterminantInVector}. 
\begin{proof}
Let $\calG_1,\calG_2$ are zero-mean Gaussian distributions with 
covariance matrix $\matSigma = \matX, \matTheta = \matY$, respectively.
In the matrix case, by the assumption and Lemma \ref{lem:relationBetweenCovarianceMatrixAndTotalVariationDistance},
total variation distance between $\calG_1$ and $\calG_2$ is at least $\frac{\delta}{12 e^{1/4}}$.
% In the vector case, by the assumption and Lemma \ref{lem:relationBetweenCovarianceMatrixAndTotalVariationDistanceInVector},
% total variation distance between $\calG_1$ and $\calG_2$ is at least $\frac{\delta}{4 e^{1/4}}$.
For simplicity of notation, let 
$\tilde{\delta} = \frac{\delta}{12 e^{1/4}}$ in the matrix case% or 
% $\tilde{\delta} = \frac{\delta}{4 e^{1/4}}$ in the vector case.
Consider the entropy of the following probability distribution of $\vecv$;
with probability $\alpha$, $\vecv \simeq \calG_1$,
with remaining probability $1-\alpha$, $\vecv \simeq \calG_2$. 
Its covariance matrix is $ \alpha\matSigma + (1-\alpha)\matTheta$. 
By Lemma \ref{lem:strongConvexityOfNegativeEntropy} and \ref{lem:upperBoundOfEntropy},
\begin{eqnarray*}
\ln \det(\alpha\matSigma + (1-\alpha)\matTheta) 
 &\geq& 2 H(\alpha \calG_1 + (1-\alpha)\calG_2) - \ln (2\pi e)^N \\
 &\geq& 2 \alpha H(\calG_1)\!+\!2 (1\!-\!\alpha) H(\calG_2)\!-\!\ln (2\pi e)^N\!+\!\alpha(1\!-\!\alpha)\tilde{\delta}^2 \\%\frac{\delta^2}{36e} \\
 &=& \alpha \ln \det(\matSigma) + (1 - \alpha)\ln \det(\matTheta) + \alpha(1 - \alpha) \tilde{\delta}^2.%\frac{\delta^2}{36e}
\end{eqnarray*}
%Multiplying $\log 2$ to both sides and converting the floor of logarithm, we complete the proof.
\end{proof}

\end{document}